\documentclass[11pt]{article}

\usepackage[margin=1in]{geometry}
\usepackage[numbers,compress]{natbib}
\usepackage[utf8]{inputenc}
\usepackage[T1]{fontenc}
\usepackage{hyperref}
\usepackage{url}
\usepackage{booktabs}
\usepackage{siunitx}
\usepackage{amsfonts}
\usepackage{nicefrac}
\usepackage{microtype}
\usepackage[table]{xcolor}
\usepackage{amsmath, amssymb, amsthm}
\usepackage{mathtools}
\usepackage{enumitem}
\usepackage{graphicx}
\usepackage{subcaption}
\usepackage{tabularx}
\usepackage{array}
\usepackage{multirow}
\usepackage{placeins}
\usepackage{float}
\usepackage{algorithm}
\usepackage{algpseudocode}
\usepackage{titletoc}
\usepackage[most]{tcolorbox}

\graphicspath{{../../../m2s/results/mnist/}{../results/assets/figures/}}

\definecolor{appthmgray}{gray}{0.96}
\definecolor{appthmborder}{gray}{0.72}
\newtcolorbox{appthmbox}[1]{
  enhanced,
  breakable,
  colback=appthmgray,
  colframe=appthmborder,
  boxrule=0.4pt,
  arc=0pt,
  left=6pt,
  right=6pt,
  top=6pt,
  bottom=6pt,
  before skip=8pt,
  after skip=8pt,
  before upper={\textbf{#1.}\enspace\itshape}
}
\newtcolorbox{appthmframe}{
  enhanced,
  breakable,
  colback=appthmgray,
  colframe=appthmborder,
  boxrule=0.4pt,
  arc=0pt,
  left=6pt,
  right=6pt,
  top=4pt,
  bottom=4pt,
  before skip=8pt,
  after skip=8pt
}

\definecolor{sampleboxblue}{HTML}{EEF5FA}
\definecolor{sampleboxborder}{HTML}{5B8DB8}
\newtcolorbox{owtsample}[1]{
  enhanced,
  breakable,
  colback=sampleboxblue,
  colframe=sampleboxborder,
  coltitle=black,
  title={#1},
  fonttitle=\bfseries\small,
  fontupper=\small,
  boxrule=0.5pt,
  arc=1.5mm,
  left=8pt,
  right=8pt,
  top=6pt,
  bottom=6pt,
  before skip=2pt,
  after skip=2pt,
  before upper={\setlength{\parindent}{1em}\setlength{\parskip}{0.35em}}
}

\theoremstyle{plain}
\newtheorem{theorem}{Theorem}[section]

\newtheorem{proposition}[theorem]{Proposition}

\theoremstyle{definition}
\newtheorem{definition}[theorem]{Definition}
\newtheorem{assumption}[theorem]{Assumption}

\theoremstyle{remark}

\newcommand{\R}{\mathbb{R}}
\newcommand{\E}{\mathbb{E}}
\newcommand{\1}{\mathbf{1}}
\newcommand{\ind}{\mathbf{1}}
\newcommand{\simplex}{\Delta^{K-1}}
\newcommand{\pdata}{p_{\mathrm{data}}}

\newcommand{\MtwoS}{\mathrm{M2S}}
\newcommand{\CifarMtwoSFID}{28.09}
\newcommand{\CifarSEDDFID}{42.83}
\newcommand{\CifarFIDDelta}{14.74}
\newcommand{\CifarFIDReductionPct}{34.4\%}
\DeclareMathOperator{\Prb}{Pr}
\DeclareMathOperator{\softmax}{softmax}

\title{Mean-to-Score Discrete Diffusion:\\ Posterior-Mean Denoisers for Score Entropy}

\author{%
  \textbf{Jingyuan Li}$^{2,3,4}$\thanks{Equal contribution.}
  \quad
  \textbf{Xiaoyi Jiang}$^{1,4}$\footnotemark[1]
  \quad
  \textbf{Yixuan Jiang}$^{1,4}$
  \quad
  \textbf{Wei Liu}$^{3}$ \\[0.35ex]
  \quad
  \textbf{Yi Zhu}$^{1,2,4}$
  \quad
  \textbf{Zuoqiang Shi}$^{1,2,4}$
  \quad
  \textbf{Pipi Hu}$^{2,4}$\thanks{Corresponding author.} \\[0.5ex]
  $^{1}$Tsinghua University
  \quad
  $^{2}$Beijing Institute of Mathematical Sciences and Applications \\
  $^{3}$Wuhan University
  \quad
  $^{4}$MathonAI
}
\date{}

\begin{document}

\maketitle

\begin{abstract}
Score Entropy Discrete Diffusion (SEDD) parameterizes discrete reverse processes
with unconstrained positive score ratios. Although positivity ensures
nonnegative reverse jump rates, it does not
ensure Bayes realizability: at a fixed noisy state, all candidate score ratios
must arise jointly from a single clean-token posterior under the forward
kernel. We show that the score-entropy loss has the correct population
optimum but does not enforce Bayes realizability away from it. In a
trained pure-uniform SEDD checkpoint, roughly one quarter of complete score vectors
violate the coordinate box, while more than half satisfy every
coordinate bound but remain materially incompatible with any valid clean-token
posterior. Although the corresponding continuous-time reverse jump rates remain
nonnegative, these violations can induce negative pre-normalization weights in
the finite-step sampler update. Projecting the checkpoint's raw scores onto the
bridge polytope removes all observed negative weights and lowers external generative
PPL from $203.6$ to $175.1$ without changing the sampler. To enforce Bayes
realizability by construction rather than through post-hoc projection, we
introduce \emph{mean-to-score} (M2S): the network predicts a clean-token
posterior mean and converts it to the score through an exact kernel-dependent
linear map. The map
applies to any known coordinate-wise continuous-time
Markov chain (CTMC) satisfying a mild support condition. For uniform corruption,
it maps the probability simplex onto the bridge polytope; for absorbing-mask
corruption, the
resulting objective recovers MD4 exactly.
In a controlled 28.4M-parameter CIFAR-10 comparison, M2S lowers test BPD
from $3.173$ to $3.129$ and FID-50k from $\CifarSEDDFID$ to
$\CifarMtwoSFID$.
A 170M-parameter M2S
model trained on approximately 262B OpenWebText token slots outperforms the
evaluated pure-uniform SEDD, GIDD, and Neural CTMC checkpoints at every tested
sampling budget, reaching generative PPL $143.3$ at 128 steps compared with
$183.6$ for the strongest pure-uniform baseline.
\end{abstract}

\begin{figure}[!ht]
  \centering
  \includegraphics[width=0.98\linewidth]{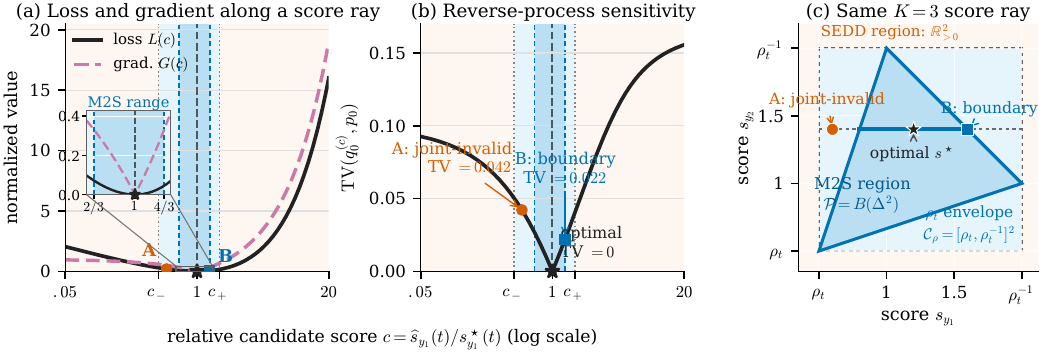}
  \caption{\textbf{M2S occupies the Bayes-realizable subset of the scalar envelope, whereas SEDD can output any positive score.}
  In a $K=3$ uniform toy, panels~(a)--(b) vary one score along
  $\widehat s(c)=(c s_{y_1}^\star,s_{y_2}^\star)$ around the shared optimum
  $c=1$. The labels $c_-$ and $c_+$ mark the limits imposed by the
  $\rho_t$-based coordinate envelope, while the inner dark-blue band is the
  smaller Bayes-realizable subset enforced by M2S and enlarged in the inset.
  Panel~(a) shows the score-entropy excess and log-score gradient; panel~(b)
  shows the terminal reverse-process TV error; and panel~(c) shows the joint
  score region. Point A passes every scalar bound but admits no valid joint
  posterior, while point B lies on the M2S boundary.}
  \label{fig:score_constraint_toy}
\end{figure}

\section{Introduction}
\label{sec:intro}

Discrete diffusion models generate finite-valued data by reversing a
continuous-time Markov chain (CTMC) that progressively corrupts a
sample~\citep{austin2021structured,campbell2022ctmc}. We study pure-uniform
corruption, which treats vocabulary states symmetrically without an absorbing
mask token. For site $i$, let $X_0^i$ denote the clean token and
$\mu_i^\star(x,t):=\Prb(X_0^i=\cdot\mid X_t=x)\in\Delta^{K-1}$ its posterior
given the noisy sequence $X_t=x$, where $\Delta^{K-1}$ is the probability
simplex over a vocabulary of size $K$. Unlike absorbing-mask training, which reduces to a weighted
clean-token cross-entropy on marked positions, uniform corruption does not
reveal which positions changed and requires the reverse model to coordinate
transitions across the full vocabulary~\citep{sahoo2024simple,shi2024simplified}.

MD4~\citep{shi2024simplified} already highlights a closely related
parameterization issue for absorbing-mask diffusion: a freely parameterized
score need not be induced by the conditional clean-token mean under the forward
process. MD4 therefore predicts this mean and constructs the masked reverse
model from it. We ask a vector-level question: whether one valid clean-token
posterior induces the complete concrete-score vector predicted by SEDD. We call
this property \emph{Bayes realizability}. It reduces to MD4's score--mean
constraint for an absorbing mask, but under pure-uniform corruption it couples
all $K-1$ scores.

In score-based CTMC models, each reverse jump rate uses a concrete score ratio
$p_t(x^{i\to y})/p_t(x)$. SEDD~\citep{lou2024discrete} predicts each ratio with
an exponentiated network output, guaranteeing positive scores and nonnegative
continuous-time reverse jump rates. \textbf{Positivity alone, however, does not
ensure that all candidate score ratios arise from one clean-token posterior.}

Formally, at noisy state $x$ and site $i$ with $k=x^i$, a candidate-score vector
$s=(s_y)_{y\neq k}$ satisfies Bayes realizability if $s=B_{t,k}\mu$ for some posterior
$\mu\in\Delta^{K-1}$. Under uniform corruption, any score induced by a
clean-token posterior must lie in the coordinate box
$\mathcal C_{t,k}:=[\rho_t,\rho_t^{-1}]^{K-1}$. This restriction concerns
Bayes realizability, not
requirement for nonnegative continuous-time reverse rates, which only require
$s_y>0$. The coordinate box is necessary but not sufficient for Bayes
realizability: the complete score vector must lie in the strictly smaller
bridge polytope $\mathcal P_{t,k}=B_{t,k}(\Delta^{K-1})\subsetneq\mathcal C_{t,k}$.
Thus, constraining a SEDD head to positive
outputs is insufficient to ensure Bayes realizability: its output may lie
outside both the bridge polytope and even the coordinate box. Score
entropy therefore has the correct population optimum but does not enforce Bayes
realizability away from that optimum. Figure~\ref{fig:score_constraint_toy} visualizes the distinction for $K=3$.
The coordinate-box interval strictly contains its intersection with the
bridge polytope, and panel~(c) shows
$\mathcal P_{t,k}\subsetneq\mathcal C_{t,k}\subsetneq\R_{>0}^2$. Point A
passes every coordinate bound but has a negative signed inverse component.

Failure of Bayes realizability has measurable consequences at
language scale. In a pure-uniform SEDD checkpoint, $6.69\%$ of $3.29$ billion
audited candidate scores
leave the coordinate box. To test Bayes realizability during generation, we
run this checkpoint with SEDD's released
sampler on 128 sequences and audit the unmodified score vector at every
position and sampling step. Of these vectors, $25.02\%$ contain a coordinate
violation, while another $56.23\%$ pass every scalar check but admit no valid
joint posterior. During these runs, the sampler update encounters negative
pre-normalization weights and operationally samples from their positive part.
To isolate the effect, we project
the scores onto $\mathcal P_{t,k}$ for 1,024 paired sequences while holding
the time grid, initial states, random-number stream, final denoising, and sampler
code fixed. Projection removes all observed negative weights and lowers external
generative PPL from $203.60$ to $175.07$.

Rather than project scores at inference time, we enforce Bayes realizability in the
model. For any known coordinate-wise forward kernel satisfying a mild support
condition, the one-site clean-token posterior recovers every concrete score:
\begin{equation}
\label{eq:intro_bridge}
  s_i^\star(x,t;y)
  =\E\!\left[
    \frac{P_t^{(i)}(y\mid X_0^i)}{P_t^{(i)}(x^i\mid X_0^i)}
    \,\middle|\, X_t=x
  \right],
\end{equation}
where the integrand depends only on the clean token at site $i$. Mean-to-score
(M2S) predicts $\mu_\theta^i(x_t,t)=\softmax(x_\theta^i(x_t,t))$ and applies the
known linear map $B_t$ to obtain the score. Because $\mu_\theta^i$ is a
distribution, M2S enforces Bayes realizability. It changes the admissible off-optimum
geometry, not the population target.

The bridge uses only a one-site posterior, not a posterior over the full
sequence. For uniform corruption it is injective on the simplex, has a closed
form, and evaluates all scores in $O(K)$ time per site. For absorbing-mask
corruption, the same construction recovers the weighted clean-token
cross-entropy of MD4 exactly.
We also show that the conditional score-entropy risk is uniquely minimized at
the true score (Proposition~\ref{prop:consistency}) and derive the rank condition
$\operatorname{rank}([B_+^\top,\1]^\top)=K$ under which this optimum uniquely recovers
the clean-token posterior (Theorem~\ref{thm:mu_optimal}); pure-uniform
corruption satisfies this condition.

Our experiments ask whether enforcing Bayes realizability matters for image and
language generation. On 256-state MNIST, we compare M2S with SEDD: the two
models share the architecture,
forward process, objective, training budget, and sampler, and M2S improves FID
from $126.1\pm0.4$ to $71.1\pm4.3$.
On CIFAR-10, an augmentation-free 28.4M-parameter M2S checkpoint reaches
a test BPD upper bound of $3.129$, compared with $3.173$ for SEDD under
the same evaluation protocol; under the same 256-step Euler sampler,
M2S also lowers FID-50k from $\CifarSEDDFID$ to $\CifarMtwoSFID$. On
OpenWebText~\citep{Gokaslan2019OpenWeb}, the best configuration of
our 169.9M-parameter M2S model reaches generative PPL $143.3$ at 128 steps and
outperforms the evaluated pure-uniform SEDD~\citep{lou2024discrete},
GIDD~\citep{vonrutte2025gidd}, and Neural CTMC~\citep{li2026neuralctmc}
baselines at all tested budgets. Figure~\ref{fig:owt_checkpoint_audit} further
compares M2S with pure-uniform SEDD through Bayes realizability and
score-entropy audits, and adds a CIFAR-10 optimization trace under identical
compute.
Keeping the SEDD checkpoint fixed, we also project its
scores onto $\mathcal P_{t,k}$ during sampling using the simplex-constrained
Euclidean projection in Eq.~\eqref{eq:projected_posterior}, lowering external
GenPPL from $203.60$ to $175.07$
(Section~\ref{sec:same_checkpoint_score_repair}).

\begin{figure}[!t]
  \centering
  \includegraphics[width=0.93\linewidth]{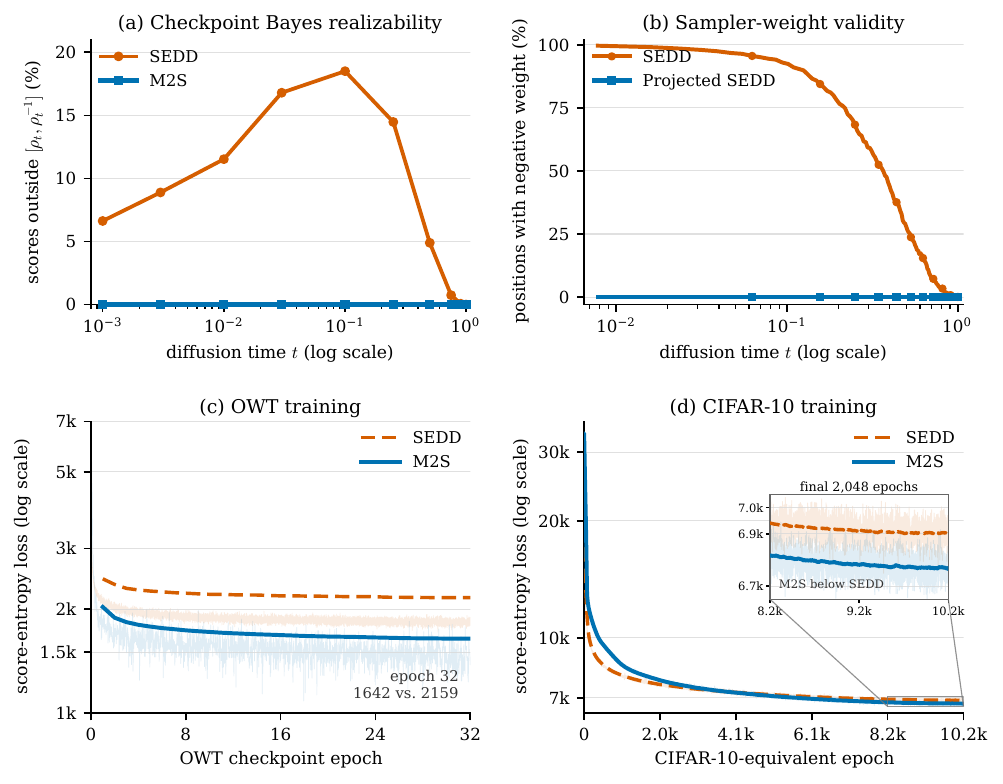}
  \caption{\textbf{M2S enforces Bayes realizability and achieves lower final
  loss on both text and image generation.}
  Panels~(a)--(b) audit score realizability and sampler-weight validity.
  Panels~(c)--(d) compare the score-entropy loss of M2S and SEDD on
  OpenWebText and CIFAR-10, respectively.}
  \label{fig:owt_checkpoint_audit}
\end{figure}

\paragraph{Contributions.}
\begin{enumerate}[leftmargin=*]
  \item \textbf{We introduce joint Bayes realizability for discrete score
  vectors.} A complete score vector satisfies Bayes realizability exactly when
  it is induced by one valid clean-token posterior.
  Positivity and even coordinate-wise feasibility do not suffice. Under
  uniform corruption, such vectors form a strict bridge
  polytope $\mathcal P_{t,k}\subsetneq\mathcal C_{t,k}$, and score entropy does
  not enforce this constraint away from its population optimum.

  \item \textbf{We derive M2S to enforce Bayes realizability.}
  M2S maps a one-site clean-token posterior to all concrete scores through an
  exact kernel-dependent linear bridge. We establish score consistency and
  posterior recovery, derive the uniform-kernel form, and recover the MD4
  objective under absorbing-mask corruption.

  \item \textbf{We show empirically that Bayes realizability matters.}
  For a fixed pure-uniform SEDD checkpoint, simplex-constrained Euclidean
  projection removes negative sampler weights and lowers generative PPL from
  $203.6$ to $175.1$. On MNIST and CIFAR-10, M2S improves FID over SEDD under
  identical settings, while a 170M-parameter OpenWebText model outperforms the
  evaluated pure-uniform SEDD, GIDD, and Neural CTMC baselines at every tested
  sampling budget.
\end{enumerate}

\FloatBarrier
\section{Related Work}
\label{sec:related}

\paragraph{Discrete diffusion.}
Diffusion probabilistic models originate from progressively corrupting Markov
chains \citep{sohl2015deep}. Discrete variants use multinomial, structured, or
absorbing transition kernels
\citep{hoogeboom2021argmax,austin2021structured,bondtaylor2022unleashing,
he2023diffusionbert,zheng2024reparameterized}. Continuous-time formulations
learn reverse CTMC rates or more general denoising Markov dynamics
\citep{campbell2022ctmc,sun2023score,benton2024denoising}. Discrete score
learning includes concrete and target-concrete score matching
\citep{meng2022concrete,zhang2025target}, while SEDD learns marginal
probability ratios with score entropy \citep{lou2024discrete}. Recent scalable
systems use score, clean-token denoiser, interpolating-kernel, or factorized-rate
parameterizations
\citep{sahoo2024simple,shi2024simplified,ou2025absorbing,
vonrutte2025gidd,li2026neuralctmc}. M2S retains SEDD's score-entropy CTMC and
changes how the complete score vector is parameterized.

\paragraph{Posterior and score parameterizations.}
Posterior parameterizations construct reverse dynamics from clean-token
predictions, as in D3PM and subsequent reparameterized or absorbing models
\citep{austin2021structured,he2023diffusionbert,zheng2024reparameterized}.
For absorbing corruption, RADD factors the concrete score through conditional
clean-data probabilities \citep{ou2025absorbing}, while MDLM and MD4 reduce
their objectives to weighted clean-token cross-entropy and MD4 highlights
score--mean consistency
\citep{sahoo2024simple,shi2024simplified}. GIDD interpolates between masked and
uniform corruption \citep{vonrutte2025gidd}, and concurrent work derives exact
coordinate-level score--denoiser conversions for uniform diffusion
\citep{gourevitch2026uniform}. These works characterize individual score
coordinates or population objectives. M2S instead asks whether the complete
SEDD score vector is induced by one clean-token posterior, audits this joint
condition in trained checkpoints, and enforces it by construction.

\section{Preliminaries}
\label{sec:preliminaries}

Let $\mathcal V=\{1,\ldots,K\}$ be a finite state space. A time-inhomogeneous
CTMC on $\mathcal V$ over $[0,T]$ is specified by a rate matrix $Q_t$ satisfying
$Q_t(a,b)\ge0$ for $a\neq b$ and $\sum_bQ_t(a,b)=0$.

\begin{definition}
\label{def:forward_transition_rate}
For some start time $s$ and end time $t=s+\Delta$ ($t>s$) as
$\Delta\to0$, we have
\begin{equation}
\label{eq:infinitesimal_transition}
q_{t\mid s}(b\mid a)
=\delta_{a,b}+Q_t(a,b)\Delta+o(\Delta),
\end{equation}
where $Q_t$ is called the forward transition rate.
\end{definition}

The Kolmogorov forward equation
$\frac{\mathrm d}{\mathrm dt}q_t=q_tQ_t$ determines the finite-time transition
kernel $P_{t\mid s}$~\citep{campbell2022ctmc}. For the corruption processes
considered here, we use the kernel in Definition~\ref{def:corruption_kernel},
which covers both uniform and absorbing corruption.

\begin{definition}
\label{def:corruption_kernel}
The cumulative transition probabilities of the CTMC are given by
\begin{equation}
\label{eq:forward_kernel}
q_{t\mid0}(a\mid z)
=\operatorname{Cat}\!\left(a;P_t(z,\cdot)\right),
\qquad
P_t=\alpha_tI+\beta_t\mathbf1\pi^\top,
\end{equation}
where $\alpha_t+\beta_t=1$, with $\alpha_0=1$ and $\alpha_T=0$, and $\pi$ is a
fixed distribution on $\mathcal V$. For uniform corruption,
$\pi=\mathbf1/K$; for absorbing corruption, $\pi=e_m$, where $e_m$ is the
one-hot vector for the mask state $m$. Here $\mathbf1$ is the all-ones vector.
As $t\to T$, $q_{t\mid0}(\cdot\mid z)\to\pi$ for every $z\in\mathcal V$, so
the reference distribution is $p_{\mathrm{ref}}=\pi$.
\end{definition}

For sequences, the forward process acts independently across sites. With
site-wise kernels $P_t^{(i)}$ and rates $Q_t^{(i)}$,
$q_t(x_t\mid x_0)=\prod_{i=1}^L P_t^{(i)}(x_t^i\mid x_0^i)$ and
$p_t(x)=\sum_{x_0}\pdata(x_0)\prod_{j=1}^L P_t^{(j)}(x^j\mid x_0^j)$.

\begin{definition}
\label{def:concrete_score}
For a sequence state $x\in\mathcal V^L$ and a candidate token $y\neq x^i$, let
$x^{i\to y}$ denote the sequence obtained by replacing site $i$ of $x$ with
$y$. Whenever $p_t(x)>0$, the concrete score and its corresponding reverse
transition rate are
\begin{equation}
\label{eq:concrete_score}
  s_i^\star(x,t;y)=\frac{p_t(x^{i\to y})}{p_t(x)},
  \qquad
  \overline Q_t^{(i)}(x^i,y\mid x)
  =Q_t^{(i)}(y,x^i)\,s_i^\star(x,t;y),
\end{equation}
respectively~\citep{campbell2022ctmc,lou2024discrete}.
\end{definition}

MDLM and GIDD use an $x_0$ parameterization, whereas SEDD directly predicts
the concrete score in Definition~\ref{def:concrete_score}. M2S instead predicts
the site-wise clean posterior
$\mu^i(x_t,t)\in\simplex$, with
\[
[\mu^i(x_t,t)]_z=\Prb(X_0^i=z\mid X_t=x_t),
\]
and maps it to the concrete score in Section~\ref{sec:methodology}.

\section{Methodology}
\label{sec:methodology}

This section presents the M2S bridge, its training loss,
and its connection to MD4. All proofs are provided in
Appendix~\ref{app:proofs}.

\begin{theorem}
\label{thm:bridge}
Under Assumption~\ref{ass:support}, for any site $i$, candidate token $y$,
and noisy state $x$ with $p_t(x)>0$, let
$\pi^\star(z)=\Prb(X_0^i=z\mid X_t=x)$. For every
$z\in\operatorname{supp}(\pi^\star)$, we have
$P_t^{(i)}(x^i\mid z)>0$. Then
\begin{equation}
\label{eq:bridge}
  s_i^\star(x,t;y)
  =
  \E\!\left[
    \frac{P_t^{(i)}(y\mid X_0^i)}
         {P_t^{(i)}(x^i\mid X_0^i)}
    \,\middle|\,X_t=x
  \right]
  =
  \sum_{z\in\operatorname{supp}(\pi^\star)}
  \pi^\star(z)\,
  \frac{P_t^{(i)}(y\mid z)}
       {P_t^{(i)}(x^i\mid z)} .
\end{equation}
\end{theorem}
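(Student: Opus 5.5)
The plan is a direct Bayes computation that factors the joint law across sites. Write $x^{-i}$ for the coordinates of $x$ other than $i$. Because the forward process acts independently across sites,
\[
p_t(x)=\sum_{z\in\mathcal V} P_t^{(i)}(x^i\mid z)\,h(z),
\qquad
h(z):=\sum_{x_0:\,x_0^i=z}\pdata(x_0)\prod_{j\neq i}P_t^{(j)}(x^j\mid x_0^j).
\]
The key point is that $h$ depends only on $x^{-i}$. Since $x^{i\to y}$ agrees with $x$ off site $i$, the same $h$ gives
\[
p_t(x^{i\to y})=\sum_z P_t^{(i)}(y\mid z)\,h(z).
\]
This decomposition is the first step, and it is routine: regroup the sum over $x_0$ by the value of $x_0^i$.

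Second, identify the posterior. By Bayes' rule,
\[
\pi^\star(z)=\Prb(X_0^i=z\mid X_t=x)=\frac{P_t^{(i)}(x^i\mid z)\,h(z)}{p_t(x)},
\]
which is well defined because $p_t(x)>0$. Hence $\operatorname{supp}(\pi^\star)=\{z: P_t^{(i)}(x^i\mid z)>0,\ h(z)>0\}$. This gives the claimed positivity $P_t^{(i)}(x^i\mid z)>0$ on the support immediately, without needing Assumption~\ref{ass:support} for that part. On the support we may then rewrite
\[
h(z)=\frac{\pi^\star(z)\,p_t(x)}{P_t^{(i)}(x^i\mid z)}.
\]

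Third, substitute this into the numerator. Dividing by $p_t(x)$ gives
\[
s_i^\star(x,t;y)=\sum_{z\in\operatorname{supp}(\pi^\star)}\pi^\star(z)\,\frac{P_t^{(i)}(y\mid z)}{P_t^{(i)}(x^i\mid z)}
+\frac{1}{p_t(x)}\sum_{z\notin\operatorname{supp}(\pi^\star)}P_t^{(i)}(y\mid z)\,h(z).
\]
The first sum equals the conditional expectation in \eqref{eq:bridge}, because the integrand is a function of $X_0^i$ alone.

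The main obstacle is showing that the residual sum vanishes. That sum runs over the $z$ with $h(z)>0$ but $P_t^{(i)}(x^i\mid z)=0$: clean tokens that could reach $y$ but not $x^i$. This is precisely the role of Assumption~\ref{ass:support}. I expect it to say that $P_t^{(i)}(x^i\mid z)=0$ forces $P_t^{(i)}(y\mid z)=0$ for relevant $y$, or simply that the kernel has full support for $t\in(0,T)$. I would verify it for the kernel of Definition~\ref{def:corruption_kernel} in both cases:
\begin{itemize}
\item \emph{Uniform corruption.} Here $P_t(a\mid z)=\alpha_t\delta_{az}+\beta_t/K>0$ whenever $\beta_t>0$, so the residual set is empty.
\item \emph{Absorbing corruption.} $P_t(x^i\mid z)=0$ happens only when $x^i\neq m$ and $z\neq x^i$. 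In that case $P_t(y\mid z)=\alpha_t\delta_{yz}$, which can be nonzero. I therefore expect the assumption to restrict attention to the mask state $x^i=m$, or to candidates $y$ with the needed compatibility, so that the residual again vanishes.
\end{itemize}
With the residual term removed, the identity \eqref{eq:bridge} follows.
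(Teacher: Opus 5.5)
Your proof is correct and follows the paper's argument essentially line by line: your $h(z)$ is the paper's $A_z$, and the residual is eliminated exactly as in the paper. Assumption~\ref{ass:support} is precisely the implication $P_t^{(i)}(x^i\mid z)=0\Rightarrow P_t^{(i)}(y\mid z)=0$ for clean tokens $z$ in the site-$i$ data support, and $h(z)>0$ places $z$ in that support. Your absorbing-kernel check is also right: at unmasked sites this implication can fail for clean candidates $y$, so the theorem does not cover them, and the paper's MD4 reduction avoids the issue only because those candidates carry zero weight.
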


Replacing the exact posterior in Theorem~\ref{thm:bridge} with the neural
network prediction $\mu_\theta^i(x,t)$ defines the M2S score as
\begin{equation}
\label{eq:m2s_score}
  s_{\theta,i}(x,t;y)
  =
  \sum_{z=1}^K[\mu_\theta^i(x,t)]_z\,
  \frac{P_t^{(i)}(y\mid z)}
       {P_t^{(i)}(x^i\mid z)}
  =: (B\mu_\theta^i)_y.
\end{equation}
For the uniform process, define the off-diagonal-to-diagonal kernel ratio as
$\rho_t=(\beta_t/K)/(\alpha_t+\beta_t/K)\in(0,1)$. At a noisy input $x_t$,
Equation~\eqref{eq:m2s_score} then simplifies, for each candidate
$y\neq x_t^i$, to
\begin{equation}
\label{eq:uniform_bridge}
  s_{\theta,i}(x_t,t;y)
  =1+(\rho_t-1)\big[\mu_\theta^i(x_t,t)\big]_{x_t^i}
    +(\rho_t^{-1}-1)\big[\mu_\theta^i(x_t,t)\big]_y.
\end{equation}
All candidate scores in Equation~\eqref{eq:uniform_bridge} are computed in
$O(K)$ time. We next introduce the training objective.

\begin{theorem}
\label{thm:elbo}
For the uniform CTMC with $\alpha_t=e^{-\sigma(t)}$ and
$\dot\sigma(t)\geq 0$, let
$x_0\sim\pdata$, $t\sim\mathrm{Unif}[0,1]$, and
$x_t\sim q_t(\cdot\mid x_0)$. For $y\neq x_t^i$, define
$r_i(x_0,x_t,t;y)=P_t^{(i)}(y\mid x_0^i)/P_t^{(i)}(x_t^i\mid x_0^i)$.
For $s,r>0$, define
$h(s,r)=s-r\log s+r\log r-r
=r\bigl(s/r-1-\log(s/r)\bigr)\ge0$, with equality if and only if $s=r$,
and extend this definition continuously to $r=0$ by $h(s,0)=s$. Define the
M2S objective
\begin{equation}
\label{eq:m2s_objective}
\begin{aligned}
  \mathcal L_{\MtwoS}(\theta)
  &=
  \E_{t,x_0,x_t}\!\left[
  \sum_{i=1}^L\sum_{y\neq x_t^i}w_{t,i}(x_t^i,y)\,
  h\bigl((B\mu_\theta^i)_y,r_i(x_0,x_t,t;y)\bigr)
  \right], \\
  w_{t,i}(x_t^i,y)
  &=Q_t^{(i)}(y,x_t^i)
  =\frac{\dot\sigma(t)}{K}
  \;\bigl(y\neq x_t^i\bigr).
\end{aligned}
\end{equation}
Let $p_{\theta,0}$ be the time-zero marginal obtained by initializing the
reverse process from $p_{\mathrm{ref}}$ and replacing $s_i^\star$ with
$s_{\theta,i}$ in its rates. Then
\begin{equation}
\label{eq:elbo_bound}
  \E_{x_0\sim\pdata}[-\log p_{\theta,0}(x_0)]
  \le
  \mathcal L_{\MtwoS}(\theta)
  +\E_{x_0\sim\pdata}
  D_{\mathrm{KL}}\!\left(q_T(\cdot\mid x_0)\,\middle\|\,p_{\mathrm{ref}}\right).
\end{equation}
\end{theorem}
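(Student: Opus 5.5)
The proof follows the standard CTMC ELBO argument used for SEDD (Campbell et al.; Lou et al.), specialized to the M2S score $s_{\theta,i}=(B\mu_\theta^i)_y$. The only structural facts about M2S needed are that $s_{\theta,i}>0$ and that it is a valid rate parameterization. Positivity holds because $B$ has strictly positive entries under the uniform kernel and $\mu_\theta^i\in\simplex$, so each $(B\mu_\theta^i)_y$ is a convex combination of positive ratios. The bound therefore holds for any positive score model; M2S is just one admissible choice.

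\textbf{Step 1 (path-space KL).} Let $\mathbb P$ be the forward path measure started from $\pdata$, and let $\mathbb Q_\theta$ be the reverse-time path measure started from $p_{\mathrm{ref}}$ with rates $\overline Q_{\theta,t}^{(i)}(x^i,y\mid x)=Q_t^{(i)}(y,x^i)s_{\theta,i}(x,t;y)$. By the data-processing inequality and the chain rule,
\[
\E_{\pdata}[-\log p_{\theta,0}(x_0)]\le \E_{x_0}\!\left[D_{\mathrm{KL}}(q_T(\cdot\mid x_0)\,\|\,p_{\mathrm{ref}})\right]+\E_{x_0}\!\left[D_{\mathrm{KL}}(\mathbb P(\cdot\mid x_0)\,\|\,\mathbb Q_\theta(\cdot\mid x_T))\right],
\]
where the second expectation is up to the usual conditioning bookkeeping: apply KL chain rule to $\mathbb P(\cdot|x_0)$ versus $\mathbb Q_\theta$ and use $-\log p_{\theta,0}(x_0)\le D_{\mathrm{KL}}(\mathbb P(\cdot\mid x_0)\|\mathbb Q_\theta)$ via Jensen. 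Under $\mathbb P(\cdot\mid x_0)$, the time-reversed process is a CTMC with rates $Q_t^{(i)}(y,x^i)\,r_i(x_0,x_t,t;y)$; this holds because the conditional marginal ratio $q_t(x^{i\to y}\mid x_0)/q_t(x\mid x_0)$ factorizes sitewise into $r_i$. Girsanov's formula for jump processes then expresses the path KL as
\[
\int_0^1\E\sum_{i}\sum_{y\ne x_t^i}Q_t^{(i)}(y,x_t^i)\Bigl[s_\theta-r_i-r_i\log\tfrac{s_\theta}{r_i}\Bigr]dt .
\]
The bracket equals $h(s_\theta,r_i)$, and it reduces to $s_\theta=h(s_\theta,0)$ when $r_i=0$; this is where the continuous extension of $h$ is used. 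Because the sequence rates are a sum over single-site changes, the integrand is a sum over sites and candidates.

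\textbf{Step 2 (specialize).} For $P_t=\alpha_tI+\beta_t\mathbf1\pi^\top$ with $\alpha_t=e^{-\sigma(t)}$, the generator is $Q_t=\dot\sigma(t)(\mathbf1\pi^\top-I)$, so $Q_t(y,x^i)=\dot\sigma/K$ for $y\ne x^i$. This gives the weight $w_{t,i}$. Since $t\sim\mathrm{Unif}[0,1]$, the time integral becomes $\E_t$ and reproduces $\mathcal L_{\MtwoS}$. The condition $\dot\sigma\ge0$ makes the weights nonnegative, so $Q_t$ is a valid generator.

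\textbf{Main obstacle.} The delicate step is the rigorous Girsanov/KL identity for time-inhomogeneous jump processes. It requires absolute continuity: $\mathbb Q_\theta$ must put positive rate wherever $\mathbb P$ jumps, which holds because $s_\theta>0$ everywhere. It also requires finiteness and integrability near $t=0$ and $t=T$, where $\dot\sigma$ may blow up. I would cite the continuous-time ELBO of Campbell et al. and Lou et al.'s denoising score entropy theorem, verifying their hypotheses (positive learned rates, matching jump supports, integrable weights) rather than rederiving it. If integrability fails, both sides of the inequality are infinite, so the bound holds trivially. Finally, $h\ge0$ with equality iff $s=r$ follows from $u-1-\log u\ge0$.
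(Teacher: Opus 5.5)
Your proposal is correct and follows the paper's proof. Both invoke the continuous-time ELBO of Campbell et al.\ in the denoising-score-entropy form of Lou et al.\ (Theorem~3.6), whose integrand is exactly $h(s,r)$. Both then restrict the coordinate-wise generator to single-site moves $x_t\to x_t^{i\to y}$, where the conditional ratio factorizes to $r_i$, insert $Q_t^{(i)}(y,x_t^i)=\dot\sigma(t)/K$, and read $\int_0^1\cdot\,\mathrm dt$ as $\E_{t\sim\mathrm{Unif}[0,1]}$; your path-KL/Girsanov sketch only unpacks the cited theorem. One small slip: when the right-hand side is infinite, the bound holds trivially whatever the left-hand side is, so the claim that both sides are then infinite is neither needed nor justified.
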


The following results show that minimizing this objective recovers the true
score and the clean posterior.

\begin{proposition}
\label{prop:consistency}
For fixed $t,x,i,y$ with $s_i^\star(x,t;y)>0$, the conditional risk
$R_x(s):=\E[h(s,r_i)\mid X_t=x]$ satisfies
$R_x(s)-R_x(s_i^\star)=h(s,s_i^\star)\ge0$, with equality if and only if
$s=s_i^\star$.
\end{proposition}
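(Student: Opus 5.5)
The plan is to expand $R_x(s)$ using the explicit form of $h$ and then identify the conditional mean of $r_i$ with the true score via Theorem~\ref{thm:bridge}. Write $h(s,r)=s-r\log s+\phi(r)$ with $\phi(r):=r\log r-r$ (and $\phi(0)=0$, consistent with the extension $h(s,0)=s$). For fixed $t,x,i,y$, the conditional expectation given $X_t=x$ is a finite sum over clean sequences $x_0$ with positive posterior weight. Linearity therefore gives
\[
R_x(s)=s-\E[r_i\mid X_t=x]\,\log s+\E[\phi(r_i)\mid X_t=x].
\]
The last term does not depend on $s$, so it cancels in the difference $R_x(s)-R_x(s_i^\star)$. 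All terms are finite because $r_i$ takes finitely many values.

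The key step is $\E[r_i\mid X_t=x]=s_i^\star(x,t;y)$. Here $r_i(x_0,x_t,t;y)=P_t^{(i)}(y\mid x_0^i)/P_t^{(i)}(x^i\mid x_0^i)$ depends on $x_0$ only through $x_0^i$. Conditioning on $X_t=x$ therefore reduces to averaging over the one-site posterior $\pi^\star(z)=\Prb(X_0^i=z\mid X_t=x)$. Theorem~\ref{thm:bridge} (under Assumption~\ref{ass:support}) states exactly that this average equals $s_i^\star(x,t;y)$. The denominators are positive on $\operatorname{supp}(\pi^\star)$, because $P_t^{(i)}(x^i\mid z)=0$ would force zero posterior mass on $z$. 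If one prefers not to rely on the theorem, the identity follows directly from Bayes' rule and the product structure of $q_t$: the factor $P_t^{(i)}(x^i\mid z)$ cancels, leaving $\sum_{x_0}\pdata(x_0)q_t(x^{i\to y}\mid x_0)/p_t(x)=p_t(x^{i\to y})/p_t(x)$.

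Writing $s^\star:=s_i^\star(x,t;y)>0$, we then compute
\[
R_x(s)-R_x(s^\star)=s-s^\star-s^\star\log s+s^\star\log s^\star=h(s,s^\star),
\]
by the definition of $h$ with $r=s^\star$. Nonnegativity and the equality case follow from the representation $h(s,r)=r\,(u-1-\log u)$ with $u=s/r$ already recorded in Theorem~\ref{thm:elbo}. The function $u-1-\log u$ is strictly convex with a unique zero at $u=1$, and $r=s^\star>0$, so $h(s,s^\star)\ge0$ with equality if and only if $s=s^\star$.

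The only delicate point is the second step: making sure the conditional expectation of the pathwise ratio $r_i$ is well defined and equals the concrete score. This requires handling the support (zero denominators) and the terms with $r_i=0$, where $\phi(0)=0$ keeps everything finite. The rest is algebra.
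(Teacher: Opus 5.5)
Your proposal is correct and follows essentially the same route as the paper: invoke Theorem~\ref{thm:bridge} to get $\E[r_i\mid X_t=x]=s_i^\star(x,t;y)$, reduce $R_x(s)$ to $s-s^\star\log s$ plus an $s$-independent constant, and read off $R_x(s)-R_x(s^\star)=h(s,s^\star)$. The only cosmetic difference is that the paper obtains uniqueness of the minimizer from strict convexity ($R_x''(s)=s^\star/s^2>0$), whereas you use the representation $h(s,r)=r(u-1-\log u)$ with $u=s/r$; both work equally well.
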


\begin{theorem}
\label{thm:mu_optimal}
For fixed $t,x,i$, let $\pi^\star$ be the clean posterior from
Theorem~\ref{thm:bridge}, let
$\mathcal Y_+=\{y\neq x^i:w_{t,i}(x^i,y)>0\}$, and let $B_+$ contain the
rows of $B$ indexed by $\mathcal Y_+$. Assume $s_i^\star(x,t;y)>0$ for every
$y\in\mathcal Y_+$. Then $\mu$ minimizes the conditional M2S risk if and only
if $B_+\mu=B_+\pi^\star$. Moreover, $B_+$ is injective on $\simplex$ if and
only if
\begin{equation}
\label{eq:bridge_rank_condition}
  \ker(B_+)\cap\{v\in\R^K:\1^\top v=0\}=\{0\},
  \qquad\text{equivalently}\qquad
  \operatorname{rank}\!\begin{bmatrix}B_+\\ \1^\top\end{bmatrix}=K.
\end{equation}
When this rank condition holds, the unique minimizer is $\mu=\pi^\star$.
For the uniform kernel, $\dot\sigma(t)>0$ makes every candidate positively
weighted, and $\alpha_t\in(0,1)$ makes the augmented matrix in
Eq.~\eqref{eq:bridge_rank_condition} full rank. Therefore,
$\mu^\star=\pi^\star=\E[e_{X_0^i}\mid X_t=x]$.
\end{theorem}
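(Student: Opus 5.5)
The plan is to decompose the conditional M2S risk coordinatewise and invoke Proposition~\ref{prop:consistency} for each candidate. Fix $t,x,i$. The conditional risk of a posterior $\mu$ is
\[
\mathcal R_x(\mu)=\sum_{y\neq x^i} w_{t,i}(x^i,y)\,\E\bigl[h((B\mu)_y,r_i)\mid X_t=x\bigr].
\]
Candidates with $w_{t,i}(x^i,y)=0$ contribute nothing, so only $y\in\mathcal Y_+$ matter.

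For each $y\in\mathcal Y_+$, Proposition~\ref{prop:consistency} gives
\[
R_x((B\mu)_y)=R_x(s_i^\star(x,t;y))+h\bigl((B\mu)_y,s_i^\star(x,t;y)\bigr).
\]
Hence
\[
\mathcal R_x(\mu)=\text{const}+\sum_{y\in\mathcal Y_+}w_{t,i}(x^i,y)\,h\bigl((B\mu)_y,s_i^\star(x,t;y)\bigr).
\]
Theorem~\ref{thm:bridge} shows that $s_i^\star(x,t;y)=(B\pi^\star)_y$, so $\pi^\star\in\simplex$ attains the constant and is therefore a minimizer.

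Every weighted term is nonnegative, the weights are positive on $\mathcal Y_+$, and $h(s,r)=0$ if and only if $s=r$ for $r>0$. It follows that $\mu$ minimizes $\mathcal R_x$ if and only if $B_+\mu=B_+\pi^\star$. One point needs care: $B$ must be well defined, meaning its entries must be finite, at every $\mu$ in the simplex, including $z$ outside $\operatorname{supp}(\pi^\star)$. Assumption~\ref{ass:support} should guarantee this. If it did not, I would restrict attention to $\mu$ for which $B\mu$ is finite.

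Injectivity on the simplex is linear algebra. Take $\mu,\mu'\in\simplex$ and set $v=\mu-\mu'$, so that $\1^\top v=0$. Then $B_+\mu=B_+\mu'$ exactly when $v\in\ker B_+$. Conversely, any $v$ with $\1^\top v=0$ and $B_+v=0$ can be scaled small enough that $\mu'=\mu_0+v$ stays in $\simplex$, taking $\mu_0$ to be the uniform interior point; this gives two distinct points with the same image. So injectivity is equivalent to $\ker(B_+)\cap\1^\perp=\{0\}$. That condition says the stacked matrix $\bigl[\begin{smallmatrix}B_+\\ \1^\top\end{smallmatrix}\bigr]$ has trivial kernel, which is the stated rank condition. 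Uniqueness of the minimizer then follows from the characterization above.

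For the uniform kernel, $w=\dot\sigma/K>0$ gives $\mathcal Y_+=\{y\neq k\}$, where $k=x^i$. Positivity $s^\star>0$ holds because $\alpha_t<1$ makes all kernel entries positive. Equation~\eqref{eq:uniform_bridge} shows that row $y$ of $B$ equals $\1^\top+(\rho_t-1)e_k^\top+(\rho_t^{-1}-1)e_y^\top$.

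The main step is verifying full rank. Suppose $\1^\top v=0$ and $B_+v=0$. Each row condition reads
\[
(\rho_t-1)v_k+(\rho_t^{-1}-1)v_y=0,
\]
so $v_y=c\,v_k$ for every $y\neq k$, where $c=-(\rho_t-1)/(\rho_t^{-1}-1)=\rho_t$. Summing, $\1^\top v=v_k\bigl(1+(K-1)\rho_t\bigr)=0$. Since $\rho_t\in(0,1)$ when $\alpha_t\in(0,1)$, the factor is positive, so $v_k=0$ and therefore $v=0$. The rank condition holds, and we conclude $\mu^\star=\pi^\star=\E[e_{X_0^i}\mid X_t=x]$.
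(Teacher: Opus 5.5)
Your proposal is correct and follows the paper's proof essentially step for step: it uses the same per-candidate decomposition via Proposition~\ref{prop:consistency} to characterize minimizers by $B_+\mu=B_+\pi^\star$, and the same difference-vector and perturbation-around-$\frac1K\1$ argument for the rank characterization. The only cosmetic difference is in the uniform case, where you verify the null-space condition directly (each row gives $v_y=\rho_t v_k$, and then $\1^\top v=0$ forces $v=0$), whereas the paper first sums the rows to obtain $\mu_k=\nu_k$, proves injectivity on $\simplex$, and then invokes the equivalence to conclude full rank.
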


Beyond the realizable case, this exact-recovery result suggests a canonical
repair for an arbitrary predicted score: project it onto the bridge image and
then invert the bridge. The uniform bridge is injective on the probability
simplex, so each realizable score vector corresponds to a unique posterior. Fix
$t,x,i$, let $k=x^i$, and
let $B_{t,k}\in\R^{(K-1)\times K}$ map a posterior to its scores for candidates
$y\neq k$. Given any score vector $s\in\R^{K-1}$, define the posterior whose
induced score is closest to $s$ by
\begin{equation}
\label{eq:projected_posterior}
  \mu^{\mathrm{proj}}(s)
  :=
  \operatorname*{arg\,min}_{\mu\in\simplex}
  \lVert B_{t,k}\mu-s\rVert_2^2.
\end{equation}

This projection distinguishes coordinate-wise feasibility from joint Bayes
realizability. Define the coordinate box and the bridge polytope by
\begin{equation}
\label{eq:score_realizability_sets}
  \mathcal C_{t,k}:=[\rho_t,\rho_t^{-1}]^{K-1},
  \qquad
  \mathcal P_{t,k}:=B_{t,k}(\simplex).
\end{equation}
Equation~\eqref{eq:uniform_bridge} gives
$\mathcal P_{t,k}\subseteq\mathcal C_{t,k}$, with strict inclusion for
$K>2$. Hence the score space has the following exact disjoint decomposition:
\begin{equation}
\label{eq:score_realizability_partition}
\R^{K-1}
=
\underbrace{\R^{K-1}\setminus\mathcal C_{t,k}}
            _{\text{coordinate-outside}}
\;\dot\cup\;
\underbrace{\mathcal C_{t,k}\setminus\mathcal P_{t,k}}
            _{\substack{\text{joint-only}\\\text{non-realizable}}}
\;\dot\cup\;
\underbrace{\mathcal P_{t,k}}
            _{\text{Bayes-realizable}}.
\end{equation}
The coordinate envelope is therefore necessary but not sufficient: a vector
can satisfy every scalar bound while its unique affine inverse has a negative
posterior component. Appendix~\ref{app:posthoc_projection} evaluates this same
decomposition on raw SEDD scores. To separate material violations from
sign-level floating-point effects, it divides the middle class at
$\varepsilon_\mu=10^{-6}$ into material and numerical-boundary subclasses;
their union corresponds to
$\mathcal C_{t,k}\setminus\mathcal P_{t,k}$ under the computed-sign convention.

\begin{proposition}
\label{prop:posterior_recovery}
Let $\pi^\star=\Prb(X_0^i=\cdot\mid X_t=x)$ and let
$s^\star=(s_i^\star(x,t;y))_{y\neq k}=B_{t,k}\pi^\star$. For the uniform
kernel with $\alpha_t\in(0,1)$, the projection in
Eq.~\eqref{eq:projected_posterior} uniquely recovers the clean posterior:
\begin{equation}
  \mu^{\mathrm{proj}}(s^\star)=\pi^\star.
\end{equation}
\end{proposition}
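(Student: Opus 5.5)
The plan is to show two things: $\pi^\star$ attains objective value zero in Eq.~\eqref{eq:projected_posterior}, and $B_{t,k}$ is injective on $\simplex$. Since $B_{t,k}\pi^\star=s^\star$ by Theorem~\ref{thm:bridge}, the objective $\lVert B_{t,k}\mu-s^\star\rVert_2^2$ is nonnegative and equals zero at $\mu=\pi^\star\in\simplex$. So $\pi^\star$ is a minimizer. For this I only need the support condition of Theorem~\ref{thm:bridge}, which holds automatically for the uniform kernel with $\alpha_t\in(0,1)$, because every entry of $P_t$ is at least $\beta_t/K>0$.

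Uniqueness then reduces to the claim that any other minimizer $\mu$ satisfies $B_{t,k}\mu=s^\star=B_{t,k}\pi^\star$, so I need injectivity of $B_{t,k}$ on the simplex. I will use the explicit form in Eq.~\eqref{eq:uniform_bridge}. Let $v=\mu-\pi^\star$, so $\1^\top v=0$ and $B_{t,k}v=0$. The $y$-th coordinate of $B_{t,k}v$ is
\[
(\rho_t-1)v_k+(\rho_t^{-1}-1)v_y,
\]
because the constant $1$ cancels: the map is affine in $\mu$, and its linear part acts on $v$ with the constant term multiplying $\1^\top v=0$. Setting each coordinate to zero gives $v_y=\rho_t v_k$ for every $y\neq k$, using $(1-\rho_t)/(\rho_t^{-1}-1)=\rho_t$. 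Summing over all coordinates gives
\[
0=\1^\top v=v_k\bigl(1+(K-1)\rho_t\bigr).
\]
Since $\rho_t>0$, this forces $v_k=0$, hence $v_y=0$ for all $y$, so $v=0$. This is exactly the rank condition of Theorem~\ref{thm:mu_optimal} with $B_+=B_{t,k}$, and I could instead cite that theorem's assertion that the augmented matrix is full rank for the uniform kernel.

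The main obstacle is being careful that Eq.~\eqref{eq:uniform_bridge} was derived for $\mu\in\simplex$ using $\sum_z\mu_z=1$. I would therefore rewrite it as a genuinely linear map $(B_{t,k}\mu)_y=\sum_z\mu_z\,P_t(y\mid z)/P_t(k\mid z)$. The kernel ratios are $1$ for $z\notin\{y,k\}$, $\rho_t^{-1}$ for $z=y$, and $\rho_t$ for $z=k$. The difference computation above should be done with this linear form, where the coefficient of $v_z$ for $z\notin\{y,k\}$ is $1$, rather than relying on the affine simplification. Redoing it this way, the equations become
\[
\textstyle\sum_{z}v_z+(\rho_t-1)v_k+(\rho_t^{-1}-1)v_y=0,
\]
and the first sum vanishes because $\1^\top v=0$. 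This recovers the same system, so the argument goes through, and $\mu^{\mathrm{proj}}(s^\star)=\pi^\star$ is the unique minimizer.
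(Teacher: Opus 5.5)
Your proof is correct and follows the paper's argument. Since $\pi^\star$ attains objective value zero, every minimizer $\widehat\mu$ satisfies $B_{t,k}\widehat\mu=B_{t,k}\pi^\star$, and injectivity of the uniform bridge on $\simplex$ then forces $\widehat\mu=\pi^\star$; the only difference is that the paper cites this injectivity from Theorem~\ref{thm:mu_optimal} (whose proof first sums Eq.~\eqref{eq:uniform_bridge} over $y\neq k$ to pin down $\mu_k$), whereas you re-derive it inline by solving $v_y=\rho_t v_k$ and then summing, which is equivalent.
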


Proposition~\ref{prop:posterior_recovery} guarantees exact recovery only at the
population optimum. For an arbitrary learned score $s$, $\mu^{\mathrm{proj}}(s)$
is a valid posterior but need not be the true one. We next show that, for the
absorbing-mask process, the M2S objective in Eq.~\eqref{eq:m2s_objective}
reduces exactly to the MD4 loss~\citep{shi2024simplified}.

\begin{theorem}
\label{thm:md4_reduction}
Let $m$ be an absorbing mask. For every clean token $z\neq m$, let
$P_t(z\mid z)=\alpha_t$ and $P_t(m\mid z)=1-\alpha_t$, and let
$P_t(m\mid m)=1$, with no transitions between distinct clean tokens.
Suppose that each $\mu_\theta^i$ is a distribution over the clean vocabulary,
so $[\mu_\theta^i]_m=0$. Then
\begin{equation}
\label{eq:md4}
  \mathcal L_{\MtwoS}^{\mathrm{abs}}(\theta)
  =\mathcal L_{\mathrm{MD4}}(\theta)
  :=
  \E_{t,x_0,x_t}
  \Biggl[
  \sum_{i:\,x_t^i=m}
  \frac{-\dot\alpha_t}{1-\alpha_t}\,
  \bigl(-\log[\mu_\theta^i(x_t,t)]_{x_0^i}\bigr)
  \Biggr].
\end{equation}
\end{theorem}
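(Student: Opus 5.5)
The plan is to substitute the absorbing kernel into the M2S objective \eqref{eq:m2s_objective} site by site and check that every term without $x_0$-dependent content drops out. Two things need care. First, the absorbing kernel has $\alpha_t$ general rather than $e^{-\sigma}$, so I use the weight $w_{t,i}(x_t^i,y)=Q_t^{(i)}(y,x_t^i)$ directly. Second, the map $B$ in \eqref{eq:m2s_score} requires $P_t(x^i\mid z)>0$ on the support of $\mu_\theta^i$; I make this precise case by case.

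The first step sets up the reverse weights. The forward generator of the absorbing process is $Q_t(z,m)=-\dot\alpha_t/\alpha_t$ for $z\neq m$, and all other off-diagonal entries are zero. Hence $w_{t,i}(x_t^i,y)=Q_t(y,x_t^i)$ is nonzero only when $x_t^i=m$ and $y\neq m$, and there it equals $-\dot\alpha_t/\alpha_t$. So only masked sites contribute, and for an unmasked site the sum over $y$ vanishes. This uses the convention that zero-weight terms are dropped even where $B$ or $h$ might be ill-defined; this is also why the unmasked ratio $P_t(y\mid z)/P_t(x^i\mid z)$, which can be $0/0$ for $z\neq x^i$, is harmless.

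The second step evaluates the two arguments of $h$ at a masked site with $x_t^i=m$ and candidate $y\neq m$.
\begin{itemize}
\item Since $P_t(m\mid z)=1-\alpha_t>0$ for every clean $z$ and $[\mu_\theta^i]_m=0$, the M2S score is
\[
(B\mu_\theta^i)_y=\sum_z[\mu_\theta^i]_z\,\frac{P_t(y\mid z)}{1-\alpha_t}=\frac{\alpha_t}{1-\alpha_t}[\mu_\theta^i]_y .
\]
\item Since $x_t^i=m$ forces $x_0^i\neq m$, the target ratio is
\[
r_i=\frac{P_t(y\mid x_0^i)}{1-\alpha_t}=\frac{\alpha_t}{1-\alpha_t}\,\ind\{y=x_0^i\}.
\]
\end{itemize}
Write $c=\alpha_t/(1-\alpha_t)$.

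The third step expands $h$ using $h(s,0)=s$ and $h(s,r)=s-r\log s+r\log r-r$. The sum over $y\neq m$ becomes
\[
c\sum_y[\mu_\theta^i]_y\;-\;c\log\bigl(c[\mu_\theta^i]_{x_0^i}\bigr)\;+\;c\log c\;-\;c .
\]
Because $\mu_\theta^i$ sums to one over the clean vocabulary, the first and last terms cancel, as do the two $c\log c$ pieces. What remains is $-c\log[\mu_\theta^i]_{x_0^i}$. Multiplying by the weight gives
\[
\frac{-\dot\alpha_t}{\alpha_t}\cdot\frac{\alpha_t}{1-\alpha_t}\,\bigl(-\log[\mu_\theta^i]_{x_0^i}\bigr)=\frac{-\dot\alpha_t}{1-\alpha_t}\bigl(-\log[\mu_\theta^i]_{x_0^i}\bigr).
\]
Summing over masked sites and taking the expectation yields \eqref{eq:md4}.

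The main obstacle is the first step. There I must justify the generator form and the treatment of unmasked sites, where the ratio $P_t(y\mid z)/P_t(x^i\mid z)$ may be $0/0$. The cleanest route is to restrict the sum in \eqref{eq:m2s_objective} to $y$ with $w_{t,i}>0$, consistent with $\mathcal Y_+$ in Theorem~\ref{thm:mu_optimal}, and to note that this convention is implicit in the objective.
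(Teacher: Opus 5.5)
Your proposal is correct and follows the paper's proof essentially step for step. The Kolmogorov generator $Q_t(z,m)=-\dot\alpha_t/\alpha_t$ zeroes every unmasked-site weight. At a masked site the bridge and target become $c_t[\mu_\theta^i]_y$ and $c_t\ind\{y=x_0^i\}$ with $c_t=\alpha_t/(1-\alpha_t)$. The $c_t\sum_y[\mu_\theta^i]_y$ and $-c_t$ terms then cancel, as do the two $c_t\log c_t$ terms, and multiplying the remaining $-c_t\log[\mu_\theta^i]_{x_0^i}$ by the weight gives the MD4 factor $-\dot\alpha_t/(1-\alpha_t)$.

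Your explicit convention of dropping zero-weight candidates is a useful addition that the paper leaves implicit. Note that at unmasked sites the ratio can also be $(1-\alpha_t)/0$ or $\alpha_t/0$, not only $0/0$, so the bridge score itself is infinite there. Also, $x_0^i\neq m$ should be justified by the clean data living on the clean vocabulary, not by $x_t^i=m$.
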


\section{Experiments}
\label{sec:experiments}

\subsection{Same-Checkpoint Score Repair}
\label{sec:same_checkpoint_score_repair}

To isolate Bayes realizability at inference time from training differences, we generate
1,024 paired sequences from the epoch-32 pure-uniform SEDD checkpoint with the same released
sampler and coupled random-number stream. Replacing only
$s^{\mathrm{SEDD}}$ by its Euclidean projection onto the bridge polytope lowers external
GenPPL from $203.60$ to $175.07$; the paired average-NLL difference is
$-0.1510$ with 95\% bootstrap interval $[-0.1774,-0.1251]$. The complete
protocol, the distinction between direct, reconstructed, and projected scores,
and the partition relative to the bridge polytope are given in
Appendix~\ref{app:posthoc_projection}.

\subsection{Image and Language Generation}

We evaluate M2S on discrete image and language generation tasks. All
models use the uniform forward process
(Appendix~\ref{app:training_algorithm}) with $\alpha_t=1-t$ and
$\beta_t=t$. For a fair comparison, M2S adopts the same
DiT~\citep{peebles2023scalable} backbone and parameter count as the mainstream
baselines; full hyperparameters are given in
Appendix~\ref{app:exp_details}.

\textbf{Image Generation:}
\label{sec:exp_image}
We train M2S on MNIST, representing each $28\times28$ image as a sequence over
$\mathcal{S}=\{0,1,\ldots,255\}$. Using identical architectures, parameter counts,
optimization hyperparameters, training budgets, and samplers, M2S reduces FID
by more than 52 points relative to SEDD at every tested sampling budget, with
full details provided in Appendix~\ref{app:mnist_results}.

We also evaluate M2S on CIFAR-10 using a 28.4M-parameter U-Net with
self-attention. Following the MD4 setup, we match its total training budget of
512M image presentations while using pure-uniform corruption and the M2S
parameterization. Without data augmentation, the model reaches a test BPD
upper bound of $3.129$, compared with $3.173$ for SEDD under the same
evaluation protocol. Thus, M2S lowers test BPD by approximately $0.045$.
With the same 256-step Euler sampler and coupled random-number stream, M2S
also lowers FID-50k from $\CifarSEDDFID$ to $\CifarMtwoSFID$
(a $\CifarFIDReductionPct$ reduction); qualitative
samples and the complete protocol are reported in
Appendix~\ref{app:cifar10_results}.
It also improves over the D3PM and Campbell et al. results in
Table~\ref{tab:cifar10_results}. This result provides evidence that the
Bayes-realizable parameterization remains effective beyond MNIST. Full
training and evaluation details are given in
Appendix~\ref{app:cifar10_results}.

\begin{table}[H]
\centering
\small
\setlength{\tabcolsep}{5pt}
\caption{CIFAR-10 test BPD ($\downarrow$). M2S and SEDD are evaluated under
the same test protocol; the remaining baseline values are published results.
$\leq$ denotes a variational upper bound. The M2S and SEDD estimates are
rounded to three decimal places. M2S is trained without data augmentation.
A dash denotes an unreported parameter count.}
\label{tab:cifar10_results}
\begin{minipage}[t]{0.42\linewidth}
\centering
\vspace{0pt}
\setlength{\tabcolsep}{3.25pt}
\begin{tabular}{@{}lrS[table-format=1.2]@{}}
\toprule
Method & \# Params & {BPD ($\downarrow$)} \\
\midrule
\multicolumn{3}{@{}l}{\emph{Autoregressive}} \\
PixelRNN~\citep{oord2016pixel} & -- & 3.00 \\
Gated PixelCNN~\citep{oord2016conditional} & -- & 3.03 \\
PixelCNN++~\citep{salimans2017pixelcnn} & 53M & 2.92 \\
PixelSNAIL~\citep{chen2018pixelsnail} & 46M & 2.85 \\
Image Transformer~\citep{parmar2018image} & -- & 2.90 \\
Sparse Transformer~\citep{child2019generating} & 59M & 2.80 \\
\bottomrule
\end{tabular}
\end{minipage}\hfill
\begin{minipage}[t]{0.56\linewidth}
\centering
\vspace{0pt}
\setlength{\tabcolsep}{3.5pt}
\begin{tabular}{@{}lrr@{}}
\toprule
Method & \# Params & BPD ($\downarrow$) \\
\midrule
\multicolumn{3}{@{}l}{\emph{Pure-uniform discrete diffusion}} \\
\textbf{M2S (ours)} & \textbf{28.4M} & $\boldsymbol{\leq 3.129}$ \\
SEDD~\citep{lou2024discrete} & 28.4M & $\leq 3.173$ \\
\addlinespace[2pt]
\multicolumn{3}{@{}l}{\emph{Absorbing-mask discrete diffusion}} \\
D3PM Absorb~\citep{austin2021structured} & 37M & $\leq 4.40$ \\
Campbell et al. Absorb~\citep{campbell2022ctmc} & 28M & $\leq 3.52$ \\
MD4~\citep{shi2024simplified} & 28M & $\leq \mathbf{2.75}$ \\
\addlinespace[2pt]
\multicolumn{3}{@{}l}{\emph{Discrete-Gaussian diffusion}} \\
D3PM Gauss + logistic~\citep{austin2021structured} & 36M & $\leq 3.44$ \\
Campbell et al. ($\tau$LDR)~\citep{campbell2022ctmc} & 36M & $\leq 3.59$ \\
\bottomrule
\end{tabular}
\end{minipage}
\end{table}

\textbf{Language Modeling:}
\label{sec:exp_owt}
We compare M2S against SEDD~\citep{lou2024discrete},
MDLM~\citep{sahoo2024simple}, GIDD~\citep{vonrutte2025gidd}, and Neural
CTMC~\citep{li2026neuralctmc} on OpenWebText~\citep{Gokaslan2019OpenWeb}. To
ensure a fair comparison,
M2S and all baselines (SEDD, MDLM, GIDD, Neural CTMC) use an identical backbone
architecture (12-layer DiT-style Transformer, 768 hidden dim, 12 heads,
$\sim$169M parameters); the only difference across methods is
the parameterization of the reverse process and its associated loss. For GIDD
we report results with $p_{\mathrm{unif}}\in\{0.0,0.1,0.2\}$, where
$p_{\mathrm{unif}}=0.0$ corresponds to a pure mask process. For evaluation,
we draw 1024 unconditional samples from each model and score them with a
pretrained \textbf{Gemma2-9B} model to obtain \emph{generative perplexity}
(PPL); each method is run with multiple sampling seeds and we report the best
PPL across seeds.

\begingroup
\setlength{\intextsep}{2pt}
\begin{figure}[H]
\centering
\begin{subfigure}[t]{0.49\textwidth}
    \centering
    \includegraphics[width=\linewidth]{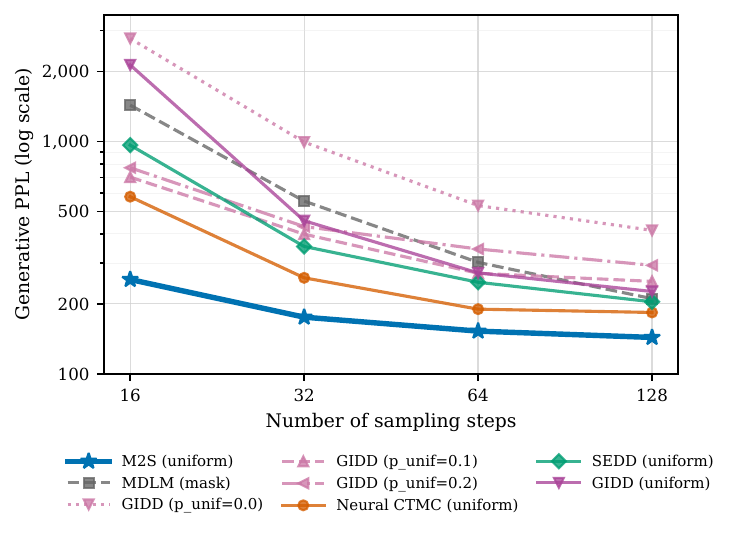}
    \caption{}
    \label{fig:owt_cross_method}
\end{subfigure}
\hfill
\begin{subfigure}[t]{0.49\textwidth}
    \centering
    \includegraphics[width=\linewidth]{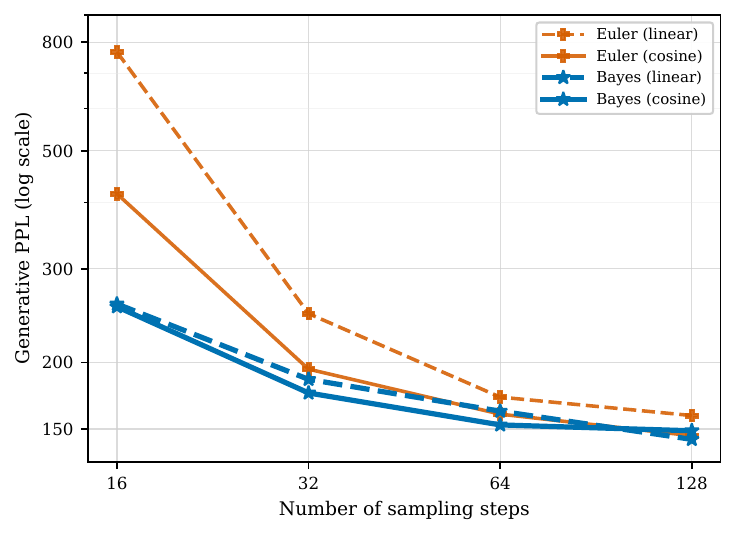}
    \caption{}
    \label{fig:owt_m2s_ablation}
\end{subfigure}
\caption{(a)~Generative-PPL comparison between M2S and the evaluated baselines.
(b)~Generative-PPL comparison of M2S Euler and Bayes samplers under linear and
cosine time grids.}
\label{fig:owt_generative_ppl}
\end{figure}
\endgroup

\vspace{-0.5em}
\begin{table}[H]
\centering
\caption{Generative perplexity ($\downarrow$) on OpenWebText for varying
numbers of sampling steps. All checkpoints use DiT-style backbones at a
comparable parameter scale. \textbf{Bold}: best overall per column;
\underline{underline}: best among rows with a reported 262B-token training
budget.}
\label{tab:owt_generative_ppl}
\small
\setlength{\tabcolsep}{3pt}
\begin{tabular}{@{}clccrrrr@{}}
\toprule
Type & Method & Train Toks & Max Len & 16 & 32 & 64 & 128 \\
\midrule
\multirow{3}{*}{mask}
  & SEDD~\citep{lou2024discrete} & 682B & 1024 & $825.5$ & $337.9$ & $186.5$ & $\mathbf{127.2}$ \\
  & MDLM~\citep{sahoo2024simple} & 262B & 1024 & $1432.8$ & $553.7$ & $301.6$ & $210.5$ \\
  & GIDD ($p_{\mathrm{unif}}=0.0$)~\citep{vonrutte2025gidd} & 262B & 512 & $2773.1$ & $993.7$ & $529.6$ & $414.3$ \\
\midrule
\multirow{2}{*}{mixture}
  & GIDD ($p_{\mathrm{unif}}=0.1$)~\citep{vonrutte2025gidd} & 262B & 512 & $702.0$ & $398.9$ & $270.8$ & $249.8$ \\
  & GIDD ($p_{\mathrm{unif}}=0.2$)~\citep{vonrutte2025gidd} & 262B & 512 & $770.4$ & $430.1$ & $344.3$ & $293.0$ \\
\midrule
\multirow{4}{*}{uniform}
  & Neural CTMC~\citep{li2026neuralctmc} & 262B & 512 & $578.3$ & $258.8$ & $189.7$ & $183.6$ \\
  & SEDD~\citep{lou2024discrete} & 262B & 512 & $963.9$ & $353.2$ & $247.7$ & $204.1$ \\
  & GIDD~\citep{vonrutte2025gidd} & 262B & 512 & $2134.0$ & $455.6$ & $271.7$ & $226.0$ \\
  & M2S (Bayes, cosine) & 262B & 512 & $\mathbf{\underline{254.6}}$ & $\mathbf{\underline{175.3}}$ & $\mathbf{\underline{152.6}}$ & $\underline{148.8}$ \\
\bottomrule
\end{tabular}
\par\vspace{2pt}{\footnotesize Note: (1) The losses of SEDD (mask) and GIDD
($p_{\mathrm{unif}}=0.0$) are equivalent to the MDLM loss. (2) Checkpoint and
sampler details are given in
Appendix~\ref{app:owt_experiments}.\par}
\end{table}
\vspace{-0.5em}

Table~\ref{tab:owt_generative_ppl} shows that M2S with Bayes sampling on the
cosine grid achieves the best overall PPL at 16--64 steps ($254.6$, $175.3$,
and $152.6$) and the best PPL among 262B-token checkpoints at 128 steps
($148.8$). Figure~\ref{fig:owt_generative_ppl}(b) compares the M2S sampling
configurations and shows that Bayes sampling on the linear grid achieves the
best 128-step PPL of $143.3$. Overall, M2S outperforms every evaluated uniform
and mixture model at all four
budgets. Against the strongest uniform baseline, M2S reduces PPL by $56\%$ at
16 steps ($578.3$ to $254.6$) and by $22\%$ at 128 steps ($183.6$ to $143.3$),
demonstrating a consistent advantage across the full sampling range. At 128
steps, M2S ranks second overall only to SEDD (mask), which uses 682B training
tokens compared with 262B for M2S.

\section{Conclusion}
\label{sec:conclusion}

This work identifies Bayes realizability as a structural requirement for
discrete score vectors. Positive SEDD scores define valid reverse CTMC rates,
but the complete vector need not be induced by any clean-token posterior. We
characterize this gap and introduce M2S, which predicts the one-site
clean-token posterior and maps it to all candidate scores through the known
forward kernel. The construction applies to known coordinate-wise kernels
under a mild support condition,
restricts uniform-corruption scores to the bridge polytope, and recovers MD4
under absorbing-mask corruption. Empirically, M2S improves FID over SEDD under
identical MNIST and CIFAR-10 settings and outperforms the evaluated pure-uniform baselines on
OpenWebText at every tested sampling budget. A fixed-checkpoint intervention
further shows that projecting SEDD scores onto the bridge polytope removes the
observed negative sampler weights and improves generative PPL. These results
establish Bayes realizability as a practical design principle for score-based
discrete diffusion.

\begingroup
\small
\bibliographystyle{plainnat}
\bibliography{references}
\endgroup

\newpage
\appendix
{\huge \bfseries Appendix}

\section*{Contents}
\startcontents[appendix]
\printcontents[appendix]{}{1}{\setcounter{tocdepth}{2}}

\vspace{1.5em}

\section{Proofs}
\label{app:proofs}

\begin{appthmframe}
\begin{assumption}[Support condition]
\label{ass:support}
Scores and posteriors are evaluated only at states $x$ with $p_t(x)>0$.
For every site $i$, candidate $y$, and clean token $z$ in the site-$i$ data
support,
\begin{equation*}
  P_t^{(i)}(x^i\mid z)=0
  \;\Longrightarrow\;
  P_t^{(i)}(y\mid z)=0 .
\end{equation*}
\end{assumption}
\end{appthmframe}

\begin{appthmbox}{Theorem~\ref{thm:bridge}}
Under Assumption~\ref{ass:support}, for any site $i$, candidate token $y$,
and noisy state $x$ with $p_t(x)>0$, let
$\pi^\star(z)=\Prb(X_0^i=z\mid X_t=x)$. For every
$z\in\operatorname{supp}(\pi^\star)$, we have
$P_t^{(i)}(x^i\mid z)>0$. Then
\begin{equation*}
  s_i^\star(x,t;y)
  =
  \E\!\left[
    \frac{P_t^{(i)}(y\mid X_0^i)}
         {P_t^{(i)}(x^i\mid X_0^i)}
    \,\middle|\,X_t=x
  \right]
  =
  \sum_{z\in\operatorname{supp}(\pi^\star)}
  \pi^\star(z)\,
  \frac{P_t^{(i)}(y\mid z)}
       {P_t^{(i)}(x^i\mid z)} .
\end{equation*}
\end{appthmbox}

\begin{proof}[Proof of Theorem~\ref{thm:bridge}]
For each clean token $z$, define
\begin{equation*}
  A_z
  :=\sum_{x_0:\,x_0^i=z}
  \pdata(x_0)\prod_{j\neq i}P_t^{(j)}(x^j\mid x_0^j),
  \qquad
  P_z:=P_t^{(i)}(x^i\mid z).
\end{equation*}
These quantities collect all contributions outside site $i$. They give
\begin{equation*}
  p_t(x)=\sum_z A_zP_z,
  \qquad
  p_t(x^{i\to y})=\sum_z A_zP_t^{(i)}(y\mid z).
\end{equation*}
The one-site posterior satisfies
\begin{equation*}
  \Prb(X_0^i=z\mid X_t=x)=\frac{A_zP_z}{p_t(x)}.
\end{equation*}
Let $\mathcal Z_\star=\operatorname{supp}(\pi^\star)
=\{z:A_zP_z>0\}$. For every $z\in\mathcal Z_\star$, we have $P_z>0$, so
define
\begin{equation*}
  g(z):=\frac{P_t^{(i)}(y\mid z)}{P_t^{(i)}(x^i\mid z)}
\end{equation*}
on $\mathcal Z_\star$. Expanding the conditional expectation over its support
gives
\begin{align*}
  \E[g(X_0^i)\mid X_t=x]
  &=\sum_{z\in\mathcal Z_\star}\pi^\star(z)g(z) \\
  &=\sum_{z\in\mathcal Z_\star}
    \frac{A_zP_z}{p_t(x)}
    \frac{P_t^{(i)}(y\mid z)}{P_z} \\
  &=\frac{1}{p_t(x)}
    \sum_{z\in\mathcal Z_\star}A_zP_t^{(i)}(y\mid z).
\end{align*}
If $z\notin\mathcal Z_\star$, then either $A_z=0$ or $P_z=0$. In the latter
case, whenever $A_z>0$, the token $z$ occurs in the clean support at site $i$,
so Assumption~\ref{ass:support} implies $P_t^{(i)}(y\mid z)=0$. Thus
$A_zP_t^{(i)}(y\mid z)=0$ outside $\mathcal Z_\star$, and the restricted sum
can be extended to all $z$. Hence
\begin{equation*}
  \E[g(X_0^i)\mid X_t=x]
  =\frac{\sum_zA_zP_t^{(i)}(y\mid z)}{p_t(x)}
  =\frac{p_t(x^{i\to y})}{p_t(x)}.
\end{equation*}
The first equality above is the posterior expansion in the theorem, and the
last equality is the concrete score $s_i^\star(x,t;y)$.
\end{proof}

\begin{appthmbox}{Theorem~\ref{thm:elbo}}
For the uniform CTMC with $\alpha_t=e^{-\sigma(t)}$ and
$\dot\sigma(t)\geq 0$, let
$x_0\sim\pdata$, $t\sim\mathrm{Unif}[0,1]$, and
$x_t\sim q_t(\cdot\mid x_0)$. For $y\neq x_t^i$, define
$r_i(x_0,x_t,t;y)=P_t^{(i)}(y\mid x_0^i)/P_t^{(i)}(x_t^i\mid x_0^i)$.
For $s,r>0$, define
$h(s,r)=s-r\log s+r\log r-r
=r\bigl(s/r-1-\log(s/r)\bigr)\ge0$, with equality if and only if $s=r$,
and extend this definition continuously to $r=0$ by $h(s,0)=s$. Define the
M2S objective
\begin{equation*}
\begin{aligned}
  \mathcal L_{\MtwoS}(\theta)
  &=
  \E_{t,x_0,x_t}\!\left[
  \sum_{i=1}^L\sum_{y\neq x_t^i}w_{t,i}(x_t^i,y)\,
  h\bigl((B\mu_\theta^i)_y,r_i(x_0,x_t,t;y)\bigr)
  \right], \\
  w_{t,i}(x_t^i,y)
  &=Q_t^{(i)}(y,x_t^i)
  =\frac{\dot\sigma(t)}{K}
  \;\bigl(y\neq x_t^i\bigr).
\end{aligned}
\end{equation*}
Let $p_{\theta,0}$ be the time-zero marginal obtained by initializing the
reverse process from $p_{\mathrm{ref}}$ and replacing $s_i^\star$ with
$s_{\theta,i}$ in its rates. Then
\begin{equation*}
  \E_{x_0\sim\pdata}[-\log p_{\theta,0}(x_0)]
  \le
  \mathcal L_{\MtwoS}(\theta)
  +\E_{x_0\sim\pdata}
  D_{\mathrm{KL}}\!\left(q_T(\cdot\mid x_0)\,\middle\|\,p_{\mathrm{ref}}\right).
\end{equation*}
\end{appthmbox}

\begin{proof}[Proof of Theorem~\ref{thm:elbo}]
For $u=s/r>0$, the inequality $u-1-\log u\ge0$ gives
$h(s,r)=r(u-1-\log u)\ge0$, with equality only at $u=1$.

Theorem~3.6 of \citet{lou2024discrete}, based on the continuous-time ELBO
of \citet{campbell2022ctmc}, includes the target-only normalization
$r(\log r-1)=r\log r-r$. Its integrand is therefore exactly $h(s,r)$.
Since $T=1$, the theorem gives, for every $x_0$,
\begin{align*}
  -\log p_{\theta,0}(x_0)
  \le{}&
  \int_0^1
  \E_{x_t\sim q_t(\cdot\mid x_0)}
  \Biggl[
    \sum_{\widetilde x\neq x_t}
    Q_t(\widetilde x,x_t)\,
    h\!\left(
      s_\theta(x_t,t;\widetilde x),
      \frac{q_t(\widetilde x\mid x_0)}{q_t(x_t\mid x_0)}
    \right)
  \Biggr] \mathrm dt \\
  &+D_{\mathrm{KL}}\!\left(q_T(\cdot\mid x_0)\,\middle\|\,p_{\mathrm{ref}}\right),
\end{align*}
written here in our row-generator convention.

The coordinate-wise generator has a nonzero off-diagonal entry only when
$\widetilde x=x_t^{i\to y}$ for some $i$ and $y\neq x_t^i$. For this pair,
\begin{equation*}
  Q_t(\widetilde x,x_t)=Q_t^{(i)}(y,x_t^i),
  \qquad
  \frac{q_t(\widetilde x\mid x_0)}{q_t(x_t\mid x_0)}
  =\frac{P_t^{(i)}(y\mid x_0^i)}
         {P_t^{(i)}(x_t^i\mid x_0^i)}
  =r_i(x_0,x_t,t;y).
\end{equation*}
For the uniform process,
$Q_t^{(i)}(y,x_t^i)=\dot\sigma(t)/K$ for $y\neq x_t^i$. Hence the integral,
after averaging over $x_0\sim\pdata$, is exactly
$\mathcal L_{\MtwoS}(\theta)$ because $t\sim\mathrm{Unif}[0,1]$. Averaging
the pointwise bound proves Eq.~\eqref{eq:elbo_bound}.
\end{proof}

\begin{appthmbox}{Proposition~\ref{prop:consistency}}
For fixed $t,x,i,y$ with $s_i^\star(x,t;y)>0$, the conditional risk
$R_x(s):=\E[h(s,r_i)\mid X_t=x]$ satisfies
$R_x(s)-R_x(s_i^\star)=h(s,s_i^\star)\ge0$, with equality if and only if
$s=s_i^\star$.
\end{appthmbox}

\begin{proof}[Proof of Proposition~\ref{prop:consistency}]
Condition on $X_t=x$ and write $s^\star=s_i^\star(x,t;y)$. Since
$\E[r_i\mid X_t=x]=s^\star$ by Theorem~\ref{thm:bridge},
\begin{align*}
  R_x(s)
  &:=\E[h(s,r_i)\mid X_t=x] \\
  &=s-s^\star\log s+C_x,
\end{align*}
where $C_x$ is independent of $s$. Moreover,
\begin{equation*}
  R_x'(s)=1-\frac{s^\star}{s},
  \qquad
  R_x''(s)=\frac{s^\star}{s^2}>0.
\end{equation*}
Thus $R_x$ is strictly convex and uniquely minimized at $s^\star$, with
\begin{equation*}
  R_x(s)-R_x(s^\star)
  =s-s^\star-s^\star\log\frac{s}{s^\star}
  =h(s,s^\star).
\end{equation*}
\end{proof}

\begin{appthmbox}{Theorem~\ref{thm:mu_optimal}}
For fixed $t,x,i$, let $\pi^\star$ be the clean posterior from
Theorem~\ref{thm:bridge}, let
$\mathcal Y_+=\{y\neq x^i:w_{t,i}(x^i,y)>0\}$, and let $B_+$ contain the
rows of $B$ indexed by $\mathcal Y_+$. Assume $s_i^\star(x,t;y)>0$ for every
$y\in\mathcal Y_+$. Then $\mu$ minimizes the conditional M2S risk if and only
if $B_+\mu=B_+\pi^\star$. Moreover, $B_+$ is injective on $\simplex$ if and
only if
\begin{equation*}
  \ker(B_+)\cap\{v\in\R^K:\1^\top v=0\}=\{0\},
  \qquad\text{equivalently}\qquad
  \operatorname{rank}\!\begin{bmatrix}B_+\\ \1^\top\end{bmatrix}=K.
\end{equation*}
When this rank condition holds, the unique minimizer is $\mu=\pi^\star$.
For the uniform kernel, $\dot\sigma(t)>0$ makes every candidate positively
weighted, and $\alpha_t\in(0,1)$ makes the augmented matrix in
Eq.~\eqref{eq:bridge_rank_condition} full rank,
and hence $\mu^\star=\pi^\star=\E[e_{X_0^i}\mid X_t=x]$.
\end{appthmbox}

\begin{proof}[Proof of Theorem~\ref{thm:mu_optimal}]
By Theorem~\ref{thm:bridge},
\begin{equation*}
  B\pi^\star=s_i^\star(x,t;\cdot).
\end{equation*}
Write $w_y=w_{t,i}(x^i,y)$ and let $\mathcal R_x(\mu)$ denote the conditional
M2S risk summed over $y\in\mathcal Y_+$. Applying
Proposition~\ref{prop:consistency} to each candidate gives, for every $\mu$
with finite conditional risk,
\begin{align*}
  \mathcal R_x(\mu)-\mathcal R_x(\pi^\star)
  &=\sum_{y\in\mathcal Y_+}w_y\,
    h\bigl((B\mu)_y,(B\pi^\star)_y\bigr)\\
  &\geq0.
\end{align*}
Here every $w_y$ and $(B\pi^\star)_y=s_i^\star(x,t;y)$ is positive. Since
$h(s,r)=0$ if and only if $s=r$, equality holds exactly when
\begin{equation*}
  B_+\mu=B_+\pi^\star.
\end{equation*}
This proves the first claim.

We next prove the rank characterization. If $\mu,\nu\in\simplex$ satisfy
$B_+\mu=B_+\nu$, then $v=\mu-\nu$ satisfies
\begin{equation*}
  B_+v=0,
  \qquad
  \1^\top v=\1^\top\mu-\1^\top\nu=0.
\end{equation*}
Thus the null-space condition in Eq.~\eqref{eq:bridge_rank_condition} implies
$v=0$ and hence $\mu=\nu$, so $B_+$ is injective on $\simplex$.

For the converse, suppose there exists a nonzero
$v\in\ker(B_+)$ with $\1^\top v=0$. Let
\begin{equation*}
  \bar\mu=\frac1K\1,
  \qquad
  0<\varepsilon<\frac{1}{K\lVert v\rVert_\infty},
  \qquad
  \mu^\pm=\bar\mu\pm\varepsilon v.
\end{equation*}
The choice of $\varepsilon$ ensures that every coordinate of $\mu^\pm$ is
positive, while $\1^\top v=0$ gives $\1^\top\mu^\pm=1$. Hence
$\mu^+,\mu^-\in\simplex$ and $\mu^+\neq\mu^-$. However,
\begin{equation*}
  B_+\mu^+-B_+\mu^-=2\varepsilon B_+v=0,
\end{equation*}
so $B_+$ is not injective on $\simplex$. This proves
\begin{equation*}
  B_+\text{ is injective on }\simplex
  \quad\Longleftrightarrow\quad
  \ker(B_+)\cap\ker(\1^\top)=\{0\}.
\end{equation*}
Finally,
\begin{equation*}
  \ker\!\begin{bmatrix}B_+\\ \1^\top\end{bmatrix}
  =\ker(B_+)\cap\ker(\1^\top).
\end{equation*}
Because the augmented matrix has $K$ columns, its kernel is trivial if and
only if its column rank is $K$. This establishes both equivalences in
Eq.~\eqref{eq:bridge_rank_condition}. When they hold,
$B_+\mu=B_+\pi^\star$ forces $\mu=\pi^\star$.

For the uniform kernel, $\dot\sigma(t)>0$ makes every candidate $y\neq x^i$
positively weighted, so $B_+=B$. Moreover, $\alpha_t\in(0,1)$ implies
$\rho_t\in(0,1)$. Let $k=x^i$ and suppose $\mu,\nu\in\simplex$ satisfy
$B\mu=B\nu$. For any $\eta\in\simplex$, summing
Eq.~\eqref{eq:uniform_bridge} over $y\neq k$ gives
\begin{align*}
  F(\eta)
  &:=\sum_{y\neq k}\bigl((B\eta)_y-1\bigr)\\
  &=(K-1)(\rho_t-1)\eta_k
    +(\rho_t^{-1}-1)(1-\eta_k).
\end{align*}
The coefficient of $\eta_k$ is
\begin{equation*}
  (K-1)(\rho_t-1)-(\rho_t^{-1}-1)
  =(\rho_t-1)\bigl((K-1)+\rho_t^{-1}\bigr),
\end{equation*}
which is nonzero for $\rho_t\in(0,1)$. Since $B\mu=B\nu$ implies
$F(\mu)=F(\nu)$, it follows that $\mu_k=\nu_k$. For each $y\neq k$,
Eq.~\eqref{eq:uniform_bridge} then gives
\begin{align*}
  0
  &=(B\mu)_y-(B\nu)_y\\
  &=(\rho_t-1)(\mu_k-\nu_k)
    +(\rho_t^{-1}-1)(\mu_y-\nu_y)\\
  &=(\rho_t^{-1}-1)(\mu_y-\nu_y).
\end{align*}
Because $\rho_t^{-1}-1\neq0$, we obtain $\mu_y=\nu_y$ for every $y\neq k$.
Thus $\mu=\nu$, proving that the uniform bridge is injective on $\simplex$;
by the equivalence above, its augmented matrix has rank $K$. The unique
minimizer is therefore
$\mu=\pi^\star=\E[e_{X_0^i}\mid X_t=x]$.
\end{proof}

\begin{appthmbox}{Proposition~\ref{prop:posterior_recovery}}
Let $\pi^\star=\Prb(X_0^i=\cdot\mid X_t=x)$ and let
$s^\star=(s_i^\star(x,t;y))_{y\neq k}=B_{t,k}\pi^\star$. For the uniform
kernel with $\alpha_t\in(0,1)$, the projection in
Eq.~\eqref{eq:projected_posterior} uniquely recovers the clean posterior:
\begin{equation*}
  \mu^{\mathrm{proj}}(s^\star)=\pi^\star.
\end{equation*}
\end{appthmbox}

\begin{proof}[Proof of Proposition~\ref{prop:posterior_recovery}]
Because $\pi^\star\in\simplex$ and $s^\star=B_{t,k}\pi^\star$, choosing
$\mu=\pi^\star$ in Eq.~\eqref{eq:projected_posterior} gives objective value
zero. Hence every minimizer $\widehat\mu$ satisfies
$B_{t,k}\widehat\mu=s^\star=B_{t,k}\pi^\star$. The uniform bridge is
injective on $\simplex$ when $\alpha_t\in(0,1)$ by
Theorem~\ref{thm:mu_optimal}, so $\widehat\mu=\pi^\star$. Thus the minimizer
is unique and equals the clean posterior.
\end{proof}

\begin{appthmbox}{Theorem~\ref{thm:md4_reduction}}
Let $m$ be an absorbing mask. For every clean token $z\neq m$, let
$P_t(z\mid z)=\alpha_t$ and $P_t(m\mid z)=1-\alpha_t$, and let
$P_t(m\mid m)=1$, with no transitions between distinct clean tokens.
Suppose that each $\mu_\theta^i$ is a distribution over the clean vocabulary,
so $[\mu_\theta^i]_m=0$. Then
\begin{equation*}
  \mathcal L_{\MtwoS}^{\mathrm{abs}}(\theta)
  =\mathcal L_{\mathrm{MD4}}(\theta)
  :=
  \E_{t,x_0,x_t}
  \Biggl[
  \sum_{i:\,x_t^i=m}
  \frac{-\dot\alpha_t}{1-\alpha_t}\,
  \bigl(-\log[\mu_\theta^i(x_t,t)]_{x_0^i}\bigr)
  \Biggr].
\end{equation*}
\end{appthmbox}

\begin{proof}[Proof of Theorem~\ref{thm:md4_reduction}]
\textbf{Masked sites.}
Fix a site with $x_t^i=m$ and let $c_t=\alpha_t/(1-\alpha_t)$. The bridge
and conditional target reduce to
\begin{equation*}
  s_{\theta,i}(x_t,t;y)=c_t[\mu_\theta^i(x_t,t)]_y,
  \qquad
  r_i(x_0,x_t,t;y)=c_t\ind\{y=x_0^i\}.
\end{equation*}
Using $h(s,0)=s$, substitution into the inner score-entropy sum in
Eq.~\eqref{eq:m2s_objective} gives
\begin{equation*}
\sum_{y\neq m}\Bigl[
  c_t[\mu_\theta^i]_y
  -c_t\ind\{y=x_0^i\}\log(c_t[\mu_\theta^i]_y)
  +c_t\ind\{y=x_0^i\}\log c_t
  -c_t\ind\{y=x_0^i\}
\Bigr].
\end{equation*}
Because $\mu_\theta^i$ is supported on the clean vocabulary,
$\sum_{y\neq m}[\mu_\theta^i]_y=1$. The first and last terms therefore
cancel after summation, and the two occurrences of $\log c_t$ also cancel.
The remaining loss is
\begin{equation*}
  -c_t\log[\mu_\theta^i]_{x_0^i}.
\end{equation*}
The Kolmogorov equation gives the clean-to-mask rate
$Q_t^{(i)}(y,m)=-\dot\alpha_t/\alpha_t$, independent of the clean candidate
$y$. Therefore,
\begin{equation*}
  Q_t^{(i)}(y,m)\,c_t
  =\frac{-\dot\alpha_t}{\alpha_t}
    \frac{\alpha_t}{1-\alpha_t}
  =\frac{-\dot\alpha_t}{1-\alpha_t}.
\end{equation*}
Thus, summing over masked sites and taking the expectation gives exactly
Eq.~\eqref{eq:md4}.

\textbf{Unmasked sites.}
Now let $x_t^i=x_0^i=k\neq m$. For every candidate $y\neq k$,
$Q_t^{(i)}(y,k)=0$: clean-to-clean rates vanish, and the absorbing state has
no outgoing rate. Thus every candidate weight is zero, so unmasked sites
make no contribution. This completes the reduction.
\end{proof}

\section{Algorithms}
\label{app:training_algorithm}

\subsection{Training Algorithm}
\label{app:m2s_training_algorithm}

During training, a clean token $z$ is corrupted according to the uniform
kernel
\begin{equation}
  q_t(y\mid z)
  =\alpha_t\ind\{y=z\}+\frac{1-\alpha_t}{K},
  \qquad
  \alpha_t=1-t,
  \label{eq:training_uniform_forward}
\end{equation}
where $y$ is the corrupted token, $K$ is the vocabulary size, and
$t\in[0,1]$.

Algorithm~\ref{alg:m2s_training} summarizes the training procedure. Given a
clean sequence $x_0$ and a diffusion time $t$, we sample $x_t$ from the forward
kernel in Eq.~\eqref{eq:training_uniform_forward}. The model predicts the
clean-token posterior, converts it to concrete scores through the M2S bridge,
and is optimized with the loss in Eq.~\eqref{eq:m2s_objective}.

\begin{algorithm}[H]
\caption{M2S Training}
\label{alg:m2s_training}
\begin{algorithmic}[1]
\Require model $f_\theta$, data distribution $p_{\mathrm{data}}$, vocabulary
size $K$, learning rate $\eta$
\While{not converged}
  \State Sample $x_0\sim p_{\mathrm{data}}$ and $t\sim\mathrm{Unif}[0,1]$
  \State Corrupt $x_0$ with Eq.~\eqref{eq:training_uniform_forward} to obtain
  $x_t$
  \State Predict $\mu_\theta^i=\softmax(f_\theta^i(x_t,t))$ for all sites $i$
  \State Compute $s_{\theta,i}(x_t,t;y)$ for all $i$ and $y\neq x_t^i$ using
  Eq.~\eqref{eq:m2s_score}
  \State Compute $r_i$ and $w_{t,i}$ from the known forward process as in
  Eq.~\eqref{eq:m2s_objective}
  \State $\theta\gets\theta-\eta\nabla_\theta\mathcal L_{\MtwoS}(\theta)$
\EndWhile
\State \Return $\theta$
\end{algorithmic}
\end{algorithm}

\subsection{Sampling Algorithms}
\label{app:m2s_sampling_algorithms}

M2S pairs Euler and Bayes updates with either a linear or cosine time grid.
This gives four samplers: Euler-linear, Euler-cosine, Bayes-linear, and
Bayes-cosine.

For $M$ sampling steps, let $t_0>t_1>\cdots>t_M$ denote the reverse-time grid:
\begin{equation}
  t_m^{\mathrm{lin}}=1-\frac{m}{M},
  \qquad
  t_m^{\mathrm{cos}}
  =\cos\!\left(\frac{\pi m}{2M}\right),
  \qquad m=0,\ldots,M,
  \label{eq:sampling_time_grids}
\end{equation}
Let $\Delta_m=t_m-t_{m+1}$. Both grids start at $t_0=1$ and end at $t_M=0$.
All samplers initialize $X_{t_0}$ from the uniform distribution and update all
sites in parallel. At step $m$, the model predicts the clean-token posterior
$\mu_{\theta,i}$ from the current sequence $x=X_{t_m}$ and maps it to scores
using Eq.~\eqref{eq:uniform_bridge}. Euler sampling uses the reverse rates
\begin{equation}
  R_{m,i}(y)
  =Q_{t_m}^{(i)}(y,x^i)\,s_{\theta,i}(x,t_m;y),
  \qquad y\neq x^i.
  \label{eq:sampling_reverse_rate}
\end{equation}

\begin{algorithm}[H]
\caption{M2S Euler Sampling (Linear or Cosine Grid)}
\label{alg:m2s_euler_sampling}
\begin{algorithmic}[1]
\Require model $f_\theta$, steps $M$, grid type
$g\in\{\mathrm{lin},\mathrm{cos}\}$, vocabulary size $K$, sequence length $L$
\State Construct $\{t_m\}_{m=0}^M$ from Eq.~\eqref{eq:sampling_time_grids}
\State Sample $X_{t_0}^i\sim\mathrm{Unif}(\{1,\ldots,K\})$ for all sites $i$
\For{$m=0,\ldots,M-1$}
  \State $x\gets X_{t_m}$ and $\Delta_m\gets t_m-t_{m+1}$
  \State Predict $\mu_{\theta,i}$ and compute scores using
  Eq.~\eqref{eq:uniform_bridge}
  \State Compute $R_{m,i}$ using Eq.~\eqref{eq:sampling_reverse_rate}
  \ForAll{sites $i$ \textbf{in parallel}}
    \State $p_i(y)\gets\Delta_mR_{m,i}(y)$ for $y\neq x^i$
    \State $p_i(x^i)\gets1-\sum_{y\neq x^i}p_i(y)$
    \State Sample $X_{t_{m+1}}^i\sim\mathrm{Categorical}(p_i)$
  \EndFor
\EndFor
\State \Return $X_{t_M}$
\end{algorithmic}
\end{algorithm}

Bayes sampling draws directly from the finite-time posterior instead of
discretizing the reverse rates. For the current token $k=X_{t_m}^i$,
marginalizing over $\mu_{\theta,i}$ gives
\begin{equation}
  \widetilde p_{m,i}(y)
  =\sum_{z=1}^K
   [\mu_{\theta,i}]_z
   \frac{q_{t_{m+1}}(y\mid z)\,
         q_{t_m\mid t_{m+1}}(k\mid y)}
        {q_{t_m}(k\mid z)}.
  \label{eq:bayes_posterior_update}
\end{equation}
\begin{algorithm}[H]
\caption{M2S Bayes Sampling (Linear or Cosine Grid)}
\label{alg:m2s_bayes_sampling}
\begin{algorithmic}[1]
\Require model $f_\theta$, steps $M$, grid type
$g\in\{\mathrm{lin},\mathrm{cos}\}$, vocabulary size $K$, sequence length $L$
\State Construct $\{t_m\}_{m=0}^M$ from Eq.~\eqref{eq:sampling_time_grids}
\State Sample $X_{t_0}^i\sim\mathrm{Unif}(\{1,\ldots,K\})$ for all sites $i$
\For{$m=0,\ldots,M-1$}
  \State $x\gets X_{t_m}$
  \State Predict $\mu_{\theta,i}$ from $(x,t_m)$
  \ForAll{sites $i$ \textbf{in parallel}}
    \State Compute $\widetilde p_{m,i}(y)$ for all $y$ using
    Eq.~\eqref{eq:bayes_posterior_update}
    \State Normalize $p_{m,i}\gets\widetilde p_{m,i}/
    \sum_y\widetilde p_{m,i}(y)$
    \State Sample $X_{t_{m+1}}^i\sim\mathrm{Categorical}(p_{m,i})$
  \EndFor
\EndFor
\State \Return $X_0$
\end{algorithmic}
\end{algorithm}

\section{Experimental Details}
\label{app:exp_details}

\subsection{MNIST Experiments}
\label{app:mnist_results}
\label{sec:exp_mnist}

We compare M2S and SEDD on $28\times28$ MNIST images quantized to $K=256$
gray levels. For a fair comparison, both methods use the same width-64 U-Net
architecture and parameter count, optimization hyperparameters (AdamW with a
learning rate of $2\times10^{-4}$), uniform transition kernel and forward
process $\alpha_t=e^{-t}$, 50-epoch training budget and generate samples with
the same sampler, while differing only in the training parameterization: M2S
predicts the posterior mean, whereas SEDD predicts the log-score directly. We
generate 5000 samples for each model and compute FID against the first 5000
MNIST test images.

Table~\ref{tab:mnist_sweep} compares the two methods with 50, 100, and 200
sampling steps while holding all other settings fixed. M2S lowers FID by more
than 52 points at every budget.

\begin{table}[H]
\centering
\caption{MNIST FID ($\downarrow$) across sampling budgets.}
\label{tab:mnist_sweep}
\begin{tabular}{lccc}
\toprule
Objective & 50 steps & 100 steps & 200 steps \\
\midrule
M2S (ours) & $\mathbf{72.80}$ & $\mathbf{72.87}$ & $\mathbf{73.44}$ \\
SEDD & $127.18$ & $126.81$ & $126.19$ \\
\bottomrule
\end{tabular}
\end{table}

Figure~\ref{fig:mnist_samples} shows the first 64 uncurated samples from the
200-step runs. M2S produces clearer strokes and fewer isolated bright pixels
than SEDD, consistent with the FID comparison.

\begin{figure}[H]
\centering
\begin{subfigure}[t]{0.31\textwidth}
    \centering
    \includegraphics[width=\linewidth]{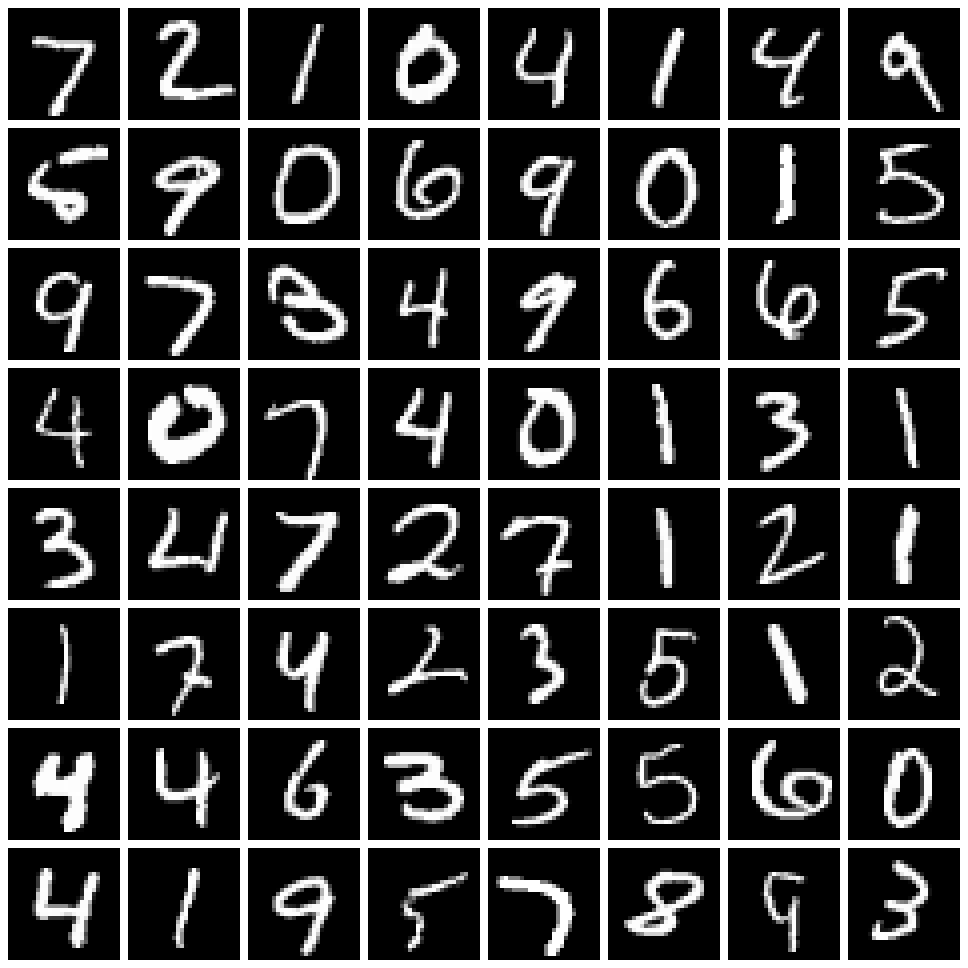}
    \caption{MNIST test data}
\end{subfigure}
\hfill
\begin{subfigure}[t]{0.31\textwidth}
    \centering
    \includegraphics[width=\linewidth]{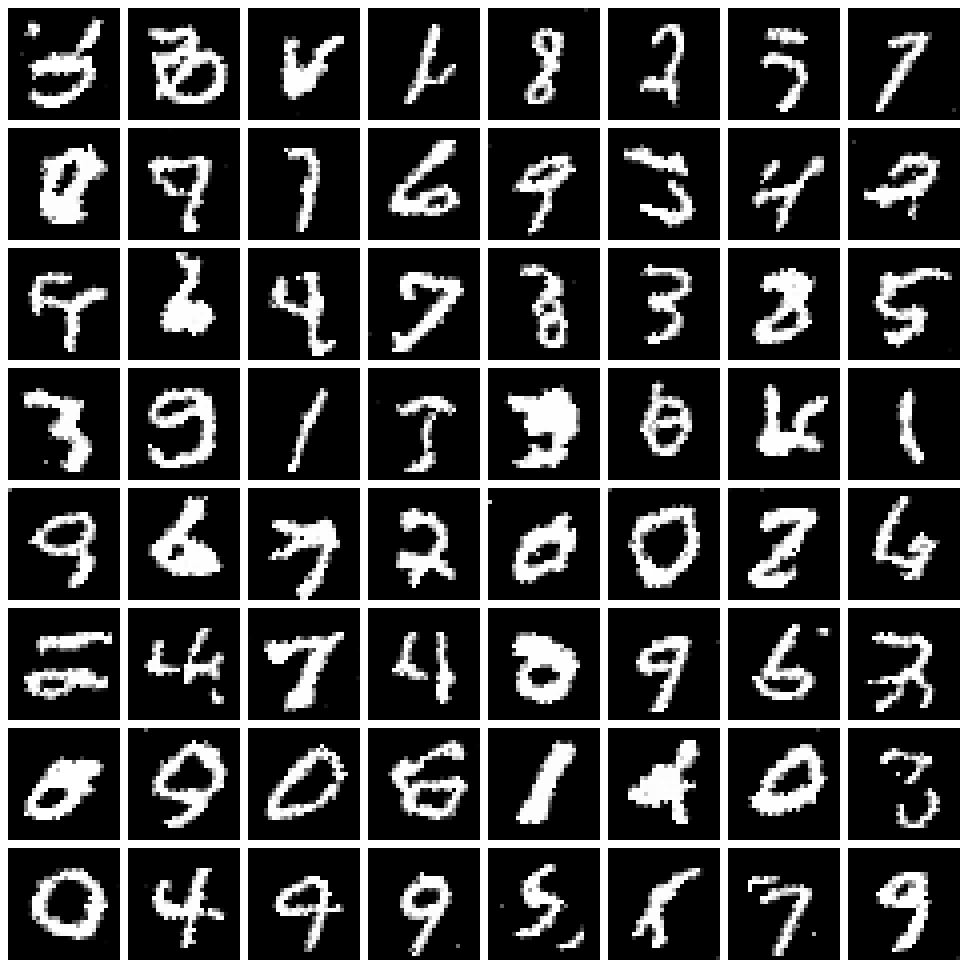}
    \caption{M2S, FID $73.4$}
\end{subfigure}
\hfill
\begin{subfigure}[t]{0.31\textwidth}
    \centering
    \includegraphics[width=\linewidth]{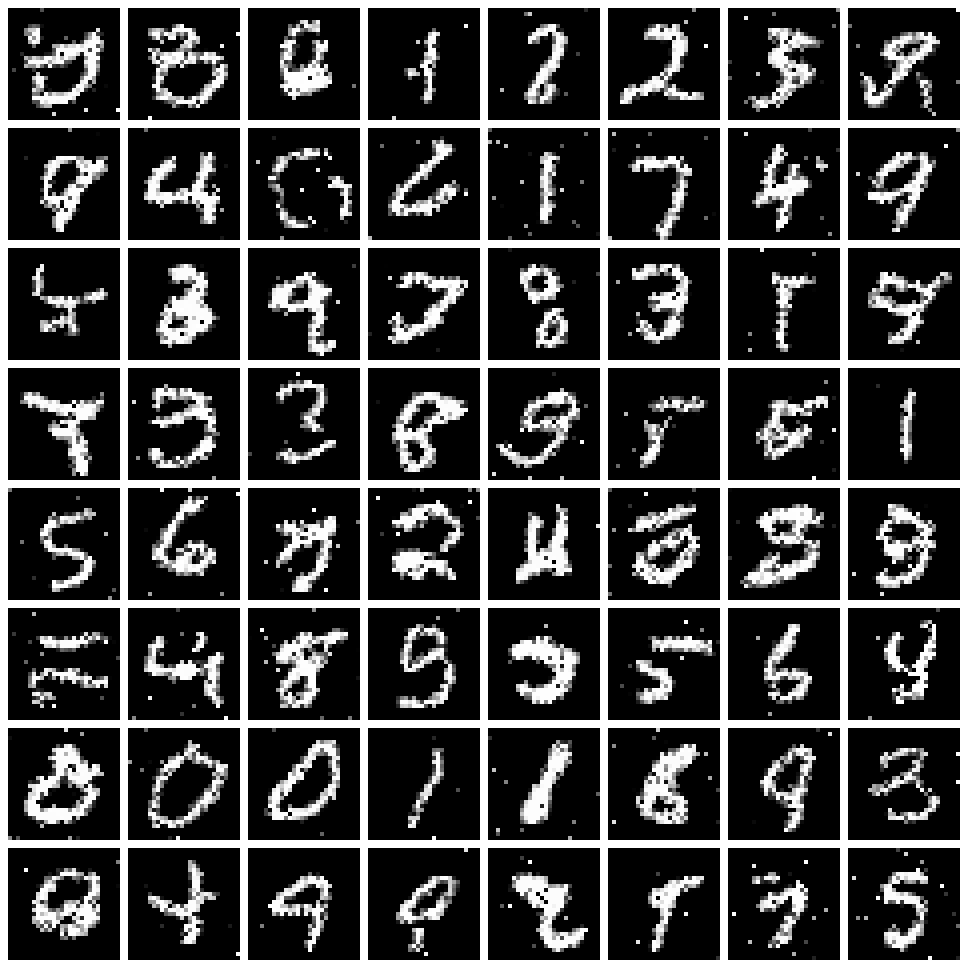}
    \caption{SEDD, FID $126.2$}
\end{subfigure}
\caption{Qualitative comparison on MNIST.}
\label{fig:mnist_samples}
\end{figure}

\subsection{CIFAR-10 Experiments}
\label{app:cifar10_results}

\paragraph{Training setup.}
We follow the CIFAR-10~\citep{krizhevsky2009learning} setup of
MD4~\citep{shi2024simplified}. Each
$32\times32$ RGB image is represented as a length-3072 sequence with 256
states per color channel. We use the same approximately 28M-parameter U-Net
with self-attention, AdamW with learning rate $4\times10^{-4}$ and weight
decay $0.01$. The warmup covers 25,600 image presentations, equal to 100
MD4 batch-256 updates and 6.25 of our large-batch updates, and is followed by
cosine learning-rate decay.
The model is trained without data augmentation. M2S retains the clean-token
posterior head but uses pure-uniform corruption and maps its output to concrete
scores as described in Section~\ref{sec:methodology}.

MD4 trains for 2M updates with batch size 256. We use a global batch size of
4096 for 125,000 updates, matching its total budget of 512M image
presentations (10,240 CIFAR-10-equivalent epochs). The resulting
28,427,520-parameter checkpoint achieves a CIFAR-10
test BPD upper bound of $3.129$. Re-evaluating SEDD under exactly the same
test protocol gives $3.173$, so M2S lowers test BPD by approximately $0.045$.

\paragraph{Optimization trace.}
The two parameterizations are trained with the same complete score-entropy
objective: M2S maps posterior logits to a log-score through the uniform bridge,
whereas SEDD predicts the log-score directly, after which both use the
same time distribution, corruption kernel, token sum, and batch mean. We log
the all-reduced raw objective over all 16 ranks at every optimizer step and
convert step $u$ to $4096u/50{,}000$ CIFAR-10-equivalent epochs. M2S starts
higher and optimizes more slowly early in training, but its trailing 625-step
mean makes its final crossover below SEDD at equivalent epoch
$3963.6$ and remains lower thereafter. Over the final 5,000 updates, the mean
raw loss is $6770.1$ for M2S and $6901.9$ for SEDD. This curve is an
optimization diagnostic, distinct from the held-out BPD estimates above.

\begin{figure}[H]
  \centering
  \includegraphics[width=0.96\linewidth]{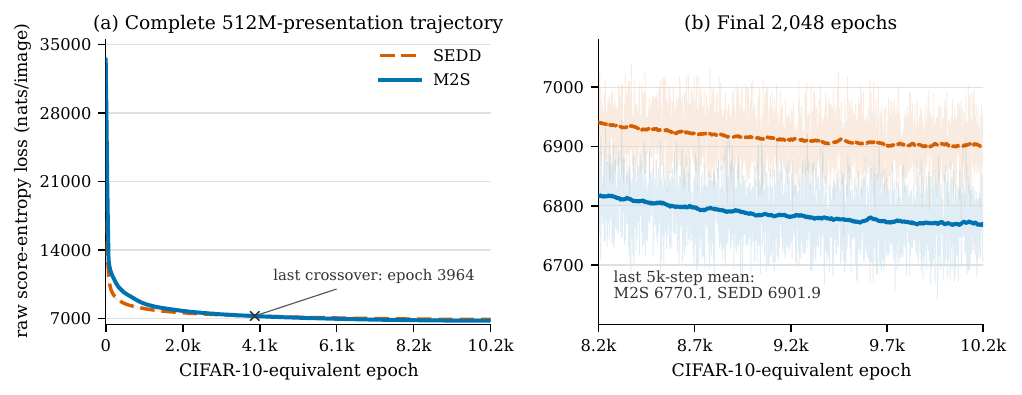}
  \caption{\textbf{M2S converges to a lower CIFAR-10 training
  objective.} Panel~(a) includes every optimizer step in the
  512M-image-presentation run; panel~(b) enlarges the final 2,048 equivalent
  epochs. Faint curves show raw per-step losses (subsampled only for rendering),
  and bold curves show trailing 625-step means, a 51.2-equivalent-epoch
  window. Both axes use equivalent epoch rather than wall-clock time or
  dataloader passes.}
  \label{fig:cifar10_training_loss}
\end{figure}

\paragraph{Generation protocol.}
We compare the EMA weights at update 125,000 for M2S and
SEDD. Both checkpoints have 28,427,520 parameters and were trained without
augmentation using the same optimizer, schedule, global batch size, and 512M
image-presentation budget. For each method, we generate 50,000 images with the
same 256-step categorical Euler discretization, cosine time grid, eight-GPU
rank assignment, per-GPU batch size 256, and seed 20260723. Every rank-local batch is
independently seeded from its first global sample index, so corresponding M2S
and SEDD images use the same initial noise and categorical random-number
stream even after a restart. We do not apply a final clean-token projection,
rejection, or sample filtering. The generation paths differ only in the learned
model output and the corresponding reverse-rate construction: M2S maps a
posterior through the bridge, whereas SEDD exponentiates a directly predicted
log-score.

\begin{table}[H]
\centering
\small
\setlength{\tabcolsep}{7pt}
\caption{\textbf{M2S improves both likelihood and sample quality in the controlled
CIFAR-10 comparison.} BPD is the four-pass variational upper bound on all
10,000 test images. FID-50k uses \texttt{pytorch-fid} v0.3.0 with
2,048-dimensional Inception pool-3 features against all 50,000 CIFAR-10
training images. Lower is better.}
\label{tab:cifar10_matched_fid}
\begin{tabular}{@{}llcc@{}}
\toprule
Method & Admissible score set & Test BPD ($\downarrow$) & FID-50k ($\downarrow$) \\
\midrule
M2S (ours) & $\mathcal P_{t,k}=B_{t,k}(\Delta^{K-1})$
  & $\mathbf{\leq 3.129}$ & $\mathbf{\CifarMtwoSFID}$ \\
SEDD & $\R_{>0}^{K-1}$
  & $\leq 3.173$ & $\CifarSEDDFID$ \\
\bottomrule
\end{tabular}
\end{table}

\begin{figure}[H]
\centering
\begin{subfigure}[t]{0.485\textwidth}
    \centering
    \includegraphics[width=\linewidth]{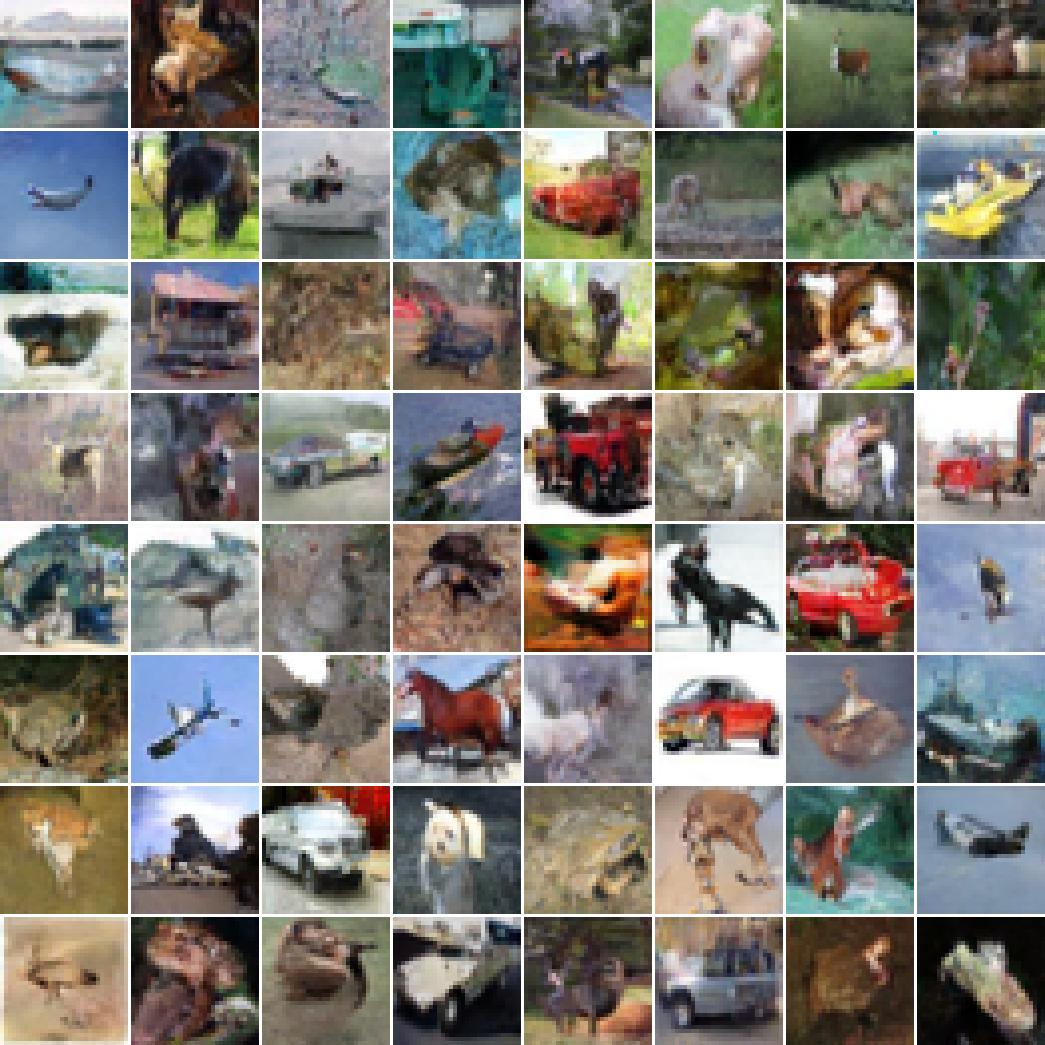}
    \caption{M2S, FID-50k $\CifarMtwoSFID$}
\end{subfigure}
\hfill
\begin{subfigure}[t]{0.485\textwidth}
    \centering
    \includegraphics[width=\linewidth]{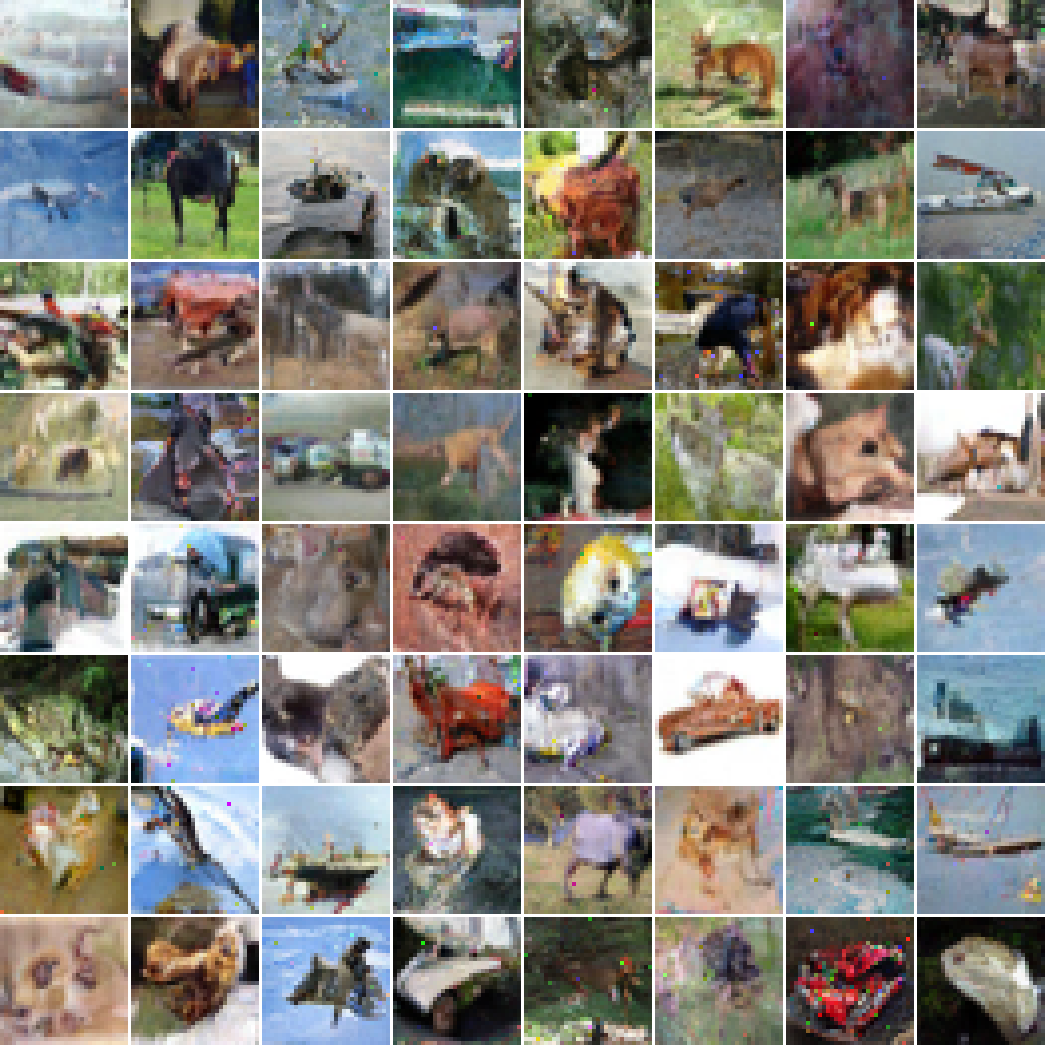}
    \caption{SEDD, FID-50k $\CifarSEDDFID$}
\end{subfigure}
\caption{\textbf{M2S suppresses the isolated chromatic artifacts visible in
SEDD under a fully paired sampler.} Each panel contains generated
indices 0--63 from its corresponding 50,000-image FID run, in index order,
without ranking, filtering, or manual selection. Corresponding positions use
the same seeded random-number stream, 256-step Euler discretization, and cosine
grid.}
\label{fig:cifar10_matched_samples}
\end{figure}

\paragraph{Connection to Bayes realizability.}
Theorem~\ref{thm:bridge} shows that the true score vector is induced by one
clean-token posterior. Because M2S predicts that posterior in the simplex, its
entire off-diagonal score vector lies in the bridge polytope
$\mathcal P_{t,k}$ from Eq.~\eqref{eq:score_realizability_sets} at every network
iterate. SEDD enforces positivity coordinate by coordinate but does not
enforce this joint constraint. Under finite data, capacity, and optimization,
the M2S parameterization therefore removes non-Bayesian score degrees of
freedom and promotes a coherent reverse-rate field. In the controlled experiment,
this structural restriction coincides with a $\CifarFIDDelta$-point FID
reduction ($\CifarFIDReductionPct$), the lower test BPD in
Table~\ref{tab:cifar10_matched_fid}, and fewer
isolated high-saturation pixels and local texture breaks in
Figure~\ref{fig:cifar10_matched_samples}. Both parameterizations contain the
population-optimal score, so the theorem alone does not imply a universal FID
ordering; the paired result is empirical evidence consistent with the proposed
finite-training mechanism.

\subsection{OpenWebText Experiments}
\label{app:owt_experiments}

\paragraph{M2S training.}
We train a 169.7M-parameter diffusion Transformer on OpenWebText with
GPT-2~\citep{radford2019language} tokenization and a maximum sequence length
of 512. We use the uniform forward
process and the M2S loss in Eq.~\eqref{eq:m2s_objective}, setting the
time-sampling cutoff to $10^{-3}$. The global batch size is 1584 across 24 H100
GPUs. We use AdamW with a peak learning rate of $5\times10^{-4}$, 3200 linear
warmup steps, cosine decay, gradient clipping at $1.0$, and bf16 precision. The
model is trained on 262B tokens.

\begin{figure}[H]
  \centering
  \includegraphics[width=0.80\linewidth]{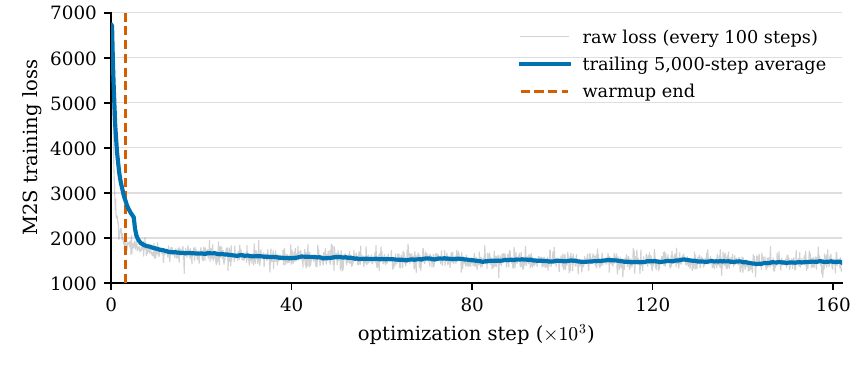}
  \caption{\textbf{M2S training loss over 161.8k optimization steps.} The gray
  curve shows raw loss recorded every 100 steps, the blue curve shows its
  trailing 5,000-step average, and the dashed line marks the end of warmup.}
  \label{fig:m2s_owt_training_loss}
\end{figure}

\paragraph{GenPPL evaluation.}
For each method and sampling-step budget, we draw 1024 unconditional samples
and score 512 tokens per sample with the same Gemma2-9B evaluator. For
checkpoints with \texttt{max\_len}=1024, we evaluate only the first 512
generated tokens; all other checkpoints generate 512-token sequences. We
report GenPPL as the exponentiated average next-token NLL assigned by the
evaluator. To evaluate the baselines with
their intended sampling procedures, MDLM, GIDD, SEDD, and Neural CTMC each use
the sampler specified in the corresponding paper~\citep{sahoo2024simple,
vonrutte2025gidd,lou2024discrete,li2026neuralctmc}. The sampling algorithms
used for M2S are described in Appendix~\ref{app:training_algorithm}.
Generated samples are shown in Appendix~\ref{app:owt_samples}.

\section{SEDD Bayes Realizability: Projection Repair, Violation Rates, and Sample Quality}
\label{app:posthoc_projection}

This appendix develops and evaluates projection repair for SEDD scores. We
first distinguish affine reconstruction, which reveals whether a score has a
valid clean-token posterior, from simplex-constrained projection, which maps an
non-realizable score to the nearest Bayes-realizable one. We then measure
Bayes-realizability
violations during SEDD sampling. Finally, we compare sample quality before and
after projection repair at a fixed checkpoint and describe the projected SEDD
sampler.

\subsection{Projection Repair}

For current token $k$, time $t$, and an SEDD candidate-score vector $s$,
we seek the valid clean-token posterior whose induced score is closest to $s$:
\begin{equation*}
  \mu^{\mathrm{proj}}(s)
  =\operatorname*{arg\,min}_{\mu\in\simplex}
  \lVert B_{t,k}\mu-s\rVert_2^2,
  \qquad
  s^{\mathrm{proj}}(s)=B_{t,k}\mu^{\mathrm{proj}}(s).
\end{equation*}
Thus $s^{\mathrm{proj}}(s)$ is the Euclidean projection of $s$ onto the set
of scores produced by valid posteriors,
$\mathcal P_{t,k}=B_{t,k}(\simplex)$. Euclidean projection onto the probability
simplex is a standard optimization primitive
\citep{duchi2008efficient,condat2016fast}. Our objective measures distance after
the linear score map $B_{t,k}$, so Section~\ref{app:projection_solver} derives a
specialized solver for the uniform bridge.

The simplex constraint is what makes this projection nontrivial. The matrix
$B_{t,k}\in\R^{(K-1)\times K}$ has full row rank, so if $\mu$ were allowed to
be an arbitrary real vector, every $s\in\R^{K-1}$ could be written exactly as
$B_{t,k}\mu$ and the projection would leave $s$ unchanged. Even after imposing
$\mathbf 1^\top\widetilde\mu=1$, every $s$ has a unique affine inverse, but
this inverse may contain negative entries. Requiring
$\widetilde\mu\in\simplex$, including nonnegativity, is what restricts the
score to $\mathcal P_{t,k}$.

For the uniform bridge, this affine inverse has a closed form. Let $n=K-1$,
$d_t=1-\rho_t$, and $b_t=d_t/\rho_t$. Then
\begin{equation*}
  m(s)
  =\frac{\sum_{y\ne k}(s_y-\rho_t)}
  {d_t(n+\rho_t^{-1})},
  \qquad
  \widetilde\mu_k=1-m(s),
  \qquad
  \widetilde\mu_y
  =\frac{s_y-\rho_t-d_t m(s)}{b_t}.
\end{equation*}
It is important to distinguish reconstruction from projection. The
SEDD head produces
\begin{equation*}
  s_y^{\mathrm{SEDD}}=\exp(a_{\theta,y}),\qquad y\ne k.
\end{equation*}
Applying the affine inverse above gives a unique signed vector
$\widetilde\mu(s^{\mathrm{SEDD}})$ with
$\mathbf 1^\top\widetilde\mu=1$. Reapplying the bridge defines the
reconstruction
\begin{equation*}
  s^{\mathrm{rec}}
  :=B_{t,k}\widetilde\mu(s^{\mathrm{SEDD}})
  =s^{\mathrm{SEDD}}.
\end{equation*}
The last equality is an algebraic identity on the full affine hyperplane, so
$s^{\mathrm{rec}}$ is a numerical consistency check, not a repair. In
particular, a negative entry of $\widetilde\mu$ can coexist with
$s^{\mathrm{rec}}=s^{\mathrm{SEDD}}$.

Bayes realizability requires $\widetilde\mu\in\simplex$. Consequently, only
the simplex-constrained projection changes a score outside the bridge
polytope:
\begin{equation*}
  \mu^{\mathrm{proj}}
  :=\mu^{\mathrm{proj}}(s^{\mathrm{SEDD}}),
  \qquad
  s^{\mathrm{proj}}:=B_{t,k}\mu^{\mathrm{proj}}.
\end{equation*}
Thus $s^{\mathrm{proj}}=s^{\mathrm{SEDD}}$ if and only if
$\widetilde\mu\in\simplex$.

Section~\ref{app:projection_solver} derives the scalar-threshold solver used by
the projected SEDD sampler.

\subsection{Bayes-Realizability Violation Rates}

We use three sets of samples for different purposes: four trajectories for the
diagnostic in Figure~\ref{fig:owt_checkpoint_audit}(b), 128 trajectories to
measure Bayes-realizability violations, and 1,024 paired sequences to evaluate
projection repair. Because the three analyses use different units, their
reported percentages should be interpreted separately.

\begin{table}[H]
\centering
\caption{Data used in the three SEDD score analyses. The first two rows inspect
SEDD scores while leaving the generated trajectories unchanged. The third
applies projection repair during sampling.}
\label{tab:sedd_projection_cohorts}
\small
\setlength{\tabcolsep}{4pt}
\begin{tabularx}{0.98\linewidth}{@{}
  >{\raggedright\arraybackslash}p{0.20\linewidth}
  >{\raggedright\arraybackslash}X
  >{\raggedright\arraybackslash}X@{}}
\toprule
Data & Purpose & Reported unit \\
\midrule
4 SEDD trajectories
  & Negative-weight diagnostic in Figure~\ref{fig:owt_checkpoint_audit}(b)
  & Step-averaged candidate-score and position rates \\
128 SEDD trajectories
  & Bayes-realizability violation rates
  & $8{,}388{,}608$ position-step score vectors \\
1,024 paired sequences
  & Sample quality before and after projection repair
  & $67{,}633{,}152$ position updates and sequence quality \\
\bottomrule
\end{tabularx}
\end{table}

For the first two analyses, we used the original SEDD scores throughout
sampling. At each state and position, we recorded the complete score vector and
checked whether every coordinate lay in the required interval and whether the
complete vector was induced by a valid clean-token posterior. These checks did
not alter the generated trajectories. Only the projection experiment in
Section~\ref{app:sedd_projection_intervention} replaced
$s^{\mathrm{SEDD}}$ with $s^{\mathrm{proj}}$ during sampling.

\paragraph{Sampler-weight violations.}
We ran four length-512 sequences with the 128-step SEDD sampler. At each step
we evaluated the SEDD score $s^{\mathrm{SEDD}}$ and its projection on the same
current states. Averaged over steps, $9.781\%$ of off-diagonal candidate scores
violated the coordinate box, and $37.632\%$ of positions had at least one
negative pre-normalization weight. Projection removed all observed negative
weights, giving the diagnostic in Figure~\ref{fig:owt_checkpoint_audit}(b).

For the 128-trajectory analysis, we used the partition in
Eq.~\eqref{eq:score_realizability_partition}. To distinguish material
violations from numerical effects near the boundary, we split
$\mathcal C_{t,k}\setminus\mathcal P_{t,k}$ at
$\varepsilon_\mu=10^{-6}$. Table~\ref{tab:realizability_taxonomy} reports the
resulting four classes over 512 positions and 128 sampler steps per trajectory.

\begin{table}[H]
\centering
\caption{SEDD score vectors over 128 trajectories, using
$\mathcal P_{t,k}=B_{t,k}(\simplex)\subsetneq\mathcal C_{t,k}
\subsetneq\R_{>0}^{K-1}$. The boundary row separates numerical from material
violations.}
\label{tab:realizability_taxonomy}
\small
\setlength{\tabcolsep}{4pt}
\begin{tabularx}{0.96\linewidth}{@{}lXrr@{}}
\toprule
Position-step class & Criterion & Fraction & Count \\
\midrule
Outside $\mathcal C_{t,k}$
  & $s^{\mathrm{SEDD}}\notin\mathcal C_{t,k}$
  & $25.0177\%$ & $2{,}098{,}635$ \\
In $\mathcal C_{t,k}\setminus\mathcal P_{t,k}$, material
  & $s^{\mathrm{SEDD}}\in\mathcal C_{t,k}\setminus\mathcal P_{t,k}$ and
    $\min\widetilde\mu<-10^{-6}$
  & $56.2310\%$ & $4{,}716{,}994$ \\
In $\mathcal C_{t,k}\setminus\mathcal P_{t,k}$, boundary
  & $s^{\mathrm{SEDD}}\in\mathcal C_{t,k}\setminus\mathcal P_{t,k}$ and
    $-10^{-6}\le\min\widetilde\mu<0$
  & $18.7419\%$ & $1{,}572{,}188$ \\
In $\mathcal P_{t,k}$
  & $s^{\mathrm{SEDD}}\in\mathcal P_{t,k}$
  & $0.0094\%$ & $791$ \\
\bottomrule
\end{tabularx}
\end{table}

Across position-steps, $25.0177\%$ lay outside $\mathcal C_{t,k}$, $56.2310\%$
were material violations in
$\mathcal C_{t,k}\setminus\mathcal P_{t,k}$, $18.7419\%$ formed its numerical
boundary band, and only $0.0094\%$ lay in $\mathcal P_{t,k}$. The $56.2310\%$
rate was counted directly because the marginal coordinate and posterior-sign
violations overlap ($1.2584\%$).

\subsection{Sample Quality after Projection Repair}
\label{app:sedd_projection_intervention}

To isolate the effect of projection repair, we kept the pure-uniform SEDD checkpoint
and sampler fixed and changed only the score passed to the sampler. It received
either $s^{\mathrm{SEDD}}$ or
$s^{\mathrm{proj}}=B_{t,k}\mu^{\mathrm{proj}}$. For each setting, we generated
1,024 sequences of length 512 with 128 sampling steps. Each pair started from
the same initial noise and used the same random numbers and final denoising
update.

With projected scores, none of the observed pre-normalization sampler weights
was negative. We decoded the samples with the GPT-2 tokenizer and measured
external GenPPL using the same Gemma2-9B evaluator as in the main language
experiment. Table~\ref{tab:sedd_projection_ablation} summarizes the comparison;
confidence intervals are estimated by sequence bootstrap, with samples paired
across the two settings.

\begin{table}[H]
\centering
\caption{Effect of projection repair at a fixed SEDD checkpoint. The checkpoint
and sampler are identical in both rows; only the score input changes. Intervals
are 95\% sequence-bootstrap confidence intervals.}
\label{tab:sedd_projection_ablation}
\small
\setlength{\tabcolsep}{5pt}
\begin{tabular}{@{}lccc@{}}
\toprule
Score input & Average NLL & External GenPPL & 95\% GenPPL interval \\
\midrule
$s^{\mathrm{SEDD}}$ & $5.31615$ & $203.60$ & $[196.64,210.45]$ \\
$s^{\mathrm{proj}}=B_{t,k}\mu^{\mathrm{proj}}$
  & $5.16518$ & $175.07$ & $[169.10,181.19]$ \\
\bottomrule
\end{tabular}
\end{table}

Using projected scores lowered external GenPPL from $203.60$ to $175.07$, a
$14.0\%$ reduction. The paired bootstrap consistently favored projection, and
all generated sequences were distinct in both settings, so the gain was not
explained by sequence collapse.

Because the checkpoint and sampler were unchanged, this comparison isolates
the effect of projection repair at inference time for this
checkpoint. It does not imply that post-hoc projection is equivalent to
training M2S; external GenPPL is also an external measure of sample quality,
not the model's likelihood.

\subsection{Projected SEDD Sampler}
\label{app:projection_solver}

The uniform bridge permits an exact reduction of the simplex-constrained
projection to a scalar threshold. Let $u=\mu_{-k}$,
$z=s^{\mathrm{SEDD}}_{-k}-\rho_t\mathbf 1$, $n=K-1$,
$d=1-\rho_t$, and $b=d/\rho_t$. Since $\mu_k=1-\mathbf 1^\top u$,
Eq.~\eqref{eq:uniform_bridge} gives
\begin{equation}
  \min_{u\ge 0,\ \mathbf 1^\top u\le 1}
  \left\|(bI+d\mathbf 1\mathbf 1^\top)u-z\right\|_2^2.
  \label{eq:uniform_score_projection_reduction}
\end{equation}
For the objective in Eq.~\eqref{eq:uniform_score_projection_reduction}, define
\begin{equation*}
  c=b^2,
  \qquad
  h=d^2\left(n+\frac{2}{\rho_t}\right),
  \qquad
  g_y=bz_y+d\sum_{v\ne k}z_v.
\end{equation*}
The KKT conditions give
\begin{equation*}
  u_y=\frac{[g_y-\theta]_+}{c}.
\end{equation*}
If the mass constraint is inactive, the scalar threshold satisfies
$\sum_{y\ne k}u_y=\theta/h$ with $\theta\in[0,h]$. If it is active, it
satisfies $\sum_{y\ne k}u_y=1$ with
$\theta\in[h,\max_{y\ne k}g_y]$. At the branch point $\theta=h$, let
$m_h=c^{-1}\sum_{y\ne k}[g_y-h]_+$. The mass constraint is active exactly
when $m_h>1$. In either branch, the mass on the left decreases monotonically
while the target on the right is nondecreasing, so batched bisection solves all
positions without sorting the vocabulary. Here $[a]_+=\max\{a,0\}$. The
resulting projection is applied at every position before the standard SEDD
update, as summarized in Algorithm~\ref{alg:projected_sedd_sampler}.

\begin{algorithm}[H]
\caption{Projected SEDD Sampler}
\label{alg:projected_sedd_sampler}
\begin{algorithmic}[1]
\Require trained SEDD model, reverse-time grid
$t_0>\cdots>t_M$, terminal distribution
\State Sample $X_{t_0}$ from the terminal distribution
\For{$m=0,\ldots,M-1$}
  \State $x\gets X_{t_m}$
  \State Evaluate the SEDD score $s_i^{\mathrm{SEDD}}$ for every position $i$
  \ForAll{positions $i$ \textbf{in parallel}}
    \State $k\gets x^i$ and
    $\mu_i^{\mathrm{proj}}\gets\mu^{\mathrm{proj}}(s_i^{\mathrm{SEDD}})$
    using Eq.~\eqref{eq:projected_posterior}
    \State $s_i^{\mathrm{proj}}\gets B_{t_m,k}\mu_i^{\mathrm{proj}}$
  \EndFor
  \State Apply the standard SEDD update with $s^{\mathrm{proj}}$ to sample
  $X_{t_{m+1}}$
\EndFor
\State Apply the standard final denoising update
\State \Return $X_0$
\end{algorithmic}
\end{algorithm}

\section{Sample Generation}
\label{app:owt_samples}

The following passages are randomly selected excerpts from samples generated
by the M2S model using 128 sampling steps.

\begin{owtsample}{Sample 1}
The European Union (EU) said the European Union would ``fight'' Russia's
aggression against ISIS with ``love for neighbour'' and achieve a ``defaulted''
nuclear agreement and prevent the launch of a nuclear weapon.

Among the top pro-Moscow-Kremlin claims about the Minsk peace agreement were
that in 2014, however the war a far east pro-Russian separatists had stopped
their long and violent clashes with Russia.

That was still false now, the EU leaders said in a joint statement delivered by
German Chancellor Angela Merkel late on Monday. They claimed that ceasefire
negotiations would continue after 2014, but added that their ``government has to
compromise,'' before rushing to Minsk.

This came during a brief discussion with President Merkel at a joint news
conference of the Joint Forces Committee of the Russian Aerospace Forces.
\end{owtsample}

\begin{owtsample}{Sample 2}
A Republican proposal to change Senate rules would force the Senate budget in
line for 10 percent, but under current law the regs spending is \$77 million.

The budget proposal submitted on Tuesday by Sen. Marco Rubio, R-Fla., the
Senate's third stiffer member and didn't include the first shortfall, a
Republican said, one that he now faces would cost a Republican incumbent
\$3,000 in the U.S. Senate, the unnamed Republican said.

While Senate Democrats were unavailable for comment about the Rubio proposal,
the Senate's budget from 2009 to fiscal 2013 was \$120 million, with Senate
Majority Leader Harry Reid, D-Nev., calling for savings to narrow the gap.

Whitehouse communications director Matt Lunt responded, saying that while any
new bill would be pork, full funding would have been required, if the usual
funds made financial sense.

Leavint Reid, D-Nev., had not responded to questions about the proposal.

According to the Senate, Rubio would have requested \$14 million---a sum that
became somewhat problematic in 2011. In the final 15 days that he had given
into campaign finance, he closed a public college and policy research board,
which had no students. The board of education was appointed by Obama.
\end{owtsample}

\begin{owtsample}{Sample 3}
Apple has revealed a new 5K tablet with a redesigned, bezel-less Touch Cover
design. It's a bit cumbersome and a little more of a departure than the first
redesign will be expected. It looks harmless, especially considering launching
the invitation ring on the new 5800 later this year, with another refreshed
version soon.

What we should see in the huge next iPhone 5S is a tablet stand, costing
\$420,000--\$850,000, along with a full top shelf smartphone. Such mini-boxes
will serve them the power down and thus the memories, and possibly drive ports
are all possible.

You'd most likely wish they'd wrap this thing up, rather than with their own
logo but at least that's what we can expect. Apple will also offer the keyboard
that pairs with it like you're forced to the top of its sides. Obviously we
find that as a nice gesture and you'd think that wouldn't have to be available
during launch too, but if you take the \$300--\$400, the setup will fit very
nicely.

Apple has also prepared a new iPad, just so you can have it with it too, but
this idea is a myth. Not bad, since the iPhone offers a 4K resolution to 4k
video and up to 12GB should be very respectable, but its not why the phone
comes with Android 4.1/2.0 OS. Stay tuned for more hands-on details about the
latest, and we'll report back to iPad TV Line in the future for more info.
\end{owtsample}

\begin{owtsample}{Sample 4}
Junior Robinson is a genius. He's always been blessed with natural size and
speed. Lemon speed is about any player. But the speed he cultivated in the
offseason, and he has averaged just six minutes in practice ice in a season
starting the past three is at all impressive.

And it's not something you can't. People can carry a a couple of minutes in
five preseason games, able to attend a freeze from assembling the NHL roster.
They are versatile, guys.

The 24-year-old former 18th overall selection chose Detroit as the first-round
pick of the 2012 NHL Pending Draft, two years after averaging 25 points in 24
games in various stints at 82 points, and starting since this past October, for
Detroit, in the same spot some of his most promising stints in short-re 30
games, and 24 points in 27, for a rookie season, he performed very well.

He returned this offseason, maximizing his own potential by one season at the
NHL---subject to arbitration until 2012 at his due date. In order for the Red
Wings to make use of roster room, they need just a couple of 25-man
reinforcements in their rotation, without shipping John O'Neil or others for a
roster spot.
\end{owtsample}

\begin{owtsample}{Sample 5}
The year is once again in 2015, as the number of layoffs has been a record year
high. Consequently, the banking giant Chase \& Chase Bank recently reported
that the sector had lost more than 19,000 jobs over the past three years.

Read more

Markers, who are criticised for feeling a stagnation, usually cite strong
employment growth trends to speak to its current trends. ``Today, the
unemployment rate is down by 330 per cent since 2010,'' said in its annual
report. ``And that depends on the shifts in opportunities in business.''

However, in 2014--2015, one Fargo Ontario office employing 17,000 from was
downsized---by 5,200---it backtracked this. It eliminated more than two-thirds
of production, filling out the need among its banking sector employees.

Earlier, the Bank had hinted that it may tighten its finances more. It released
evidence that a low in wage workers caused the unemployment cut by almost
\$14,000 to workers before a tax of 0.25 percent. This would push the GDP worse
in 2015. The recovery could not have been struck without these defects.

As Fargo is expected to come out soon about the disinitation of its economy,
the report, however, states the need to change infrastructure in a more
sustainable way.

The sale of Fargo operating Business Hummer division was taken in as the
expense by example cited above.
\end{owtsample}

\begin{owtsample}{Sample 6}
The main web apps on the desktop are 8GB allowed KB. They have a 40\% download
speed, and get a chunk of that from the Google Play Store, you can bet that
Google also intends to bring the fun back into the store by having the same
much functionality as in Firefox browsers like Chrome and Safari.

On the mobile side, we have Hoothing, Magazines, Cycle Enhancement Browser,
and Goggles apps, though this technically means you can upgrade later this
year if you tried and missed before. Full apps can go on now too.

Apps are also available on self-link for third-party developers to help with
the Web apps as well as the Mobile settings also. However, such details were
not available at this stage of the above launch.

Chrome users, however, need to enable options even for Android. That's been
going on for many years now, with the Google Chrome desktop live app (which has
been live for nearly a year now) and the mobile browsers on the U.S. and
Chinese Web versions. Still enough stuff, you know?

Apps first launched in early September will now be supported including on
Google's website, therefore making the Google Chrome app easier to sign up.
There is still an open beta. All things are ready for everyone, always and
soon.

Google is testing out the redboards in China and other countries and is
testing the locations available on the site.
\end{owtsample}

\begin{owtsample}{Sample 7}
When you sign up for an AWS Elastic Load Network (ELT), get connected with
your email or even the online language in your applications, it gets you go
immediately. This may have only gotten so rare in the past, but the full power
of computing is up for grabs in cloud computing and with its acquisition by
PxCom, Inc. its cloud-delivered services has always been unusually designed
with one important one: customer delivery in mind.

``As the dominant broadband-only provider in the world, Seattle-based AWS has
particular plans to modernize its computing infrastructure and give it
real-time on-demand service and production-ready broadband customer delivery,
with direct access to its delivery systems,'' its agency says in a release.

It also understands that it can be a layer used by its other large providers
and coordinate their services to their customers's data centers and connect
them in their case at home, an indication that not all of the customer
experiences in the cloud are necessarily different from each of the largest
ISPs' portfolio companies.

P\&Com, founded in partnership with Yormia and Escape Networks in January of
2014, is a provider of enterprise cloud service for ``cloud'' services AWS,
Microsoft Azure, Amazon Dropbox, and Google Docs.
\end{owtsample}

\begin{owtsample}{Sample 8}
Microsoft has announced it will announce and released of AI assistant in around
the holidays.

Last Thursday, Microsoft revealed that its AI assistant device, which was shown
as a prototype over at CES, ``is the successor to Google tablet- AI operating
system with a text and call interface.'' It will likely be paired with the
upcoming Microsoft-powered Xbox One.

Now there is another hugely interesting and interesting rumor floating around,
and it got execs caught at CES anyway. An official Windows blog post is
claiming that the new software will feature in users' mobile systems, and it
will provide a ``personal voice assistant for device hubs with limited sharing
space,'' Microsoft said. Unlike Siri, it works with older storage devices, but
there stored multiple SD cards or at least a volume on them. That's obviously
a waste of time, as your device can be gone out-of-state after being hooked up
to the SD card.

The AI device is all tied to Microsoft's tech future, the AI+ operating system.
Siri that will be capable of communicating with Internet-connected PCs, and the
goal will be to work with iOS, Mac, and other groups of devices in common,
require AI+ using operating system.

We said, but wait. The full video is available here. $\mu$
\end{owtsample}

\begin{owtsample}{Sample 9}
With a new law pending before Congress, California faces an uphill labor
battle, chief executive officer Xavier Becerra said recently.

There is an uproar over California's pay rate for the corporate behemoth that
sells health care, which makes home appliances and other products.

Western leaders on Capitol Hill still struggling with lower wages in countries
that lean to union workers and rely on pliers' compensation. But here were
signs of a change of heart that amounts to a higher rate for California,
partly because taxpayers in the state have invested heavily in company labor.

The federal labor department said rates in its factories in Tennessee were
also getting higher. ``As there has been such a measure of growth in the
states, labor costs have risen,'' Mr. Cosino said in ODFO's fact sheet March
20.
\end{owtsample}

\begin{owtsample}{Sample 10}
The International Maoist-China forum called in return for ``a world leader in
international stability and international goods and services''.

``China is proposing that its vision of state-denialing is a new machine,
based on international cooperation, and its own market, rather than a country
who would use its culture and opportunity to lead,'' Hu Bingkao told
participants, in an announcement of the convention on Saturday.

Saturday, as the party celebrated the first anniversary of the start of the
Maoist Commission, which was held in October when it warned of economic and
social unrest in the country.

The government, at the time felt there was no warning about the new trend,
which included the rise of crack economy, ultra-high unemployment rates,
levels of corruption, and the growth of militant groups.

The convention took place just north of the Helfen, where also collected in
some over 60 addresses take place annually.

``Our society will be part of the discussion, but our ideological reason is
simply the objective,'' the Bingkao said.

Meanwhile, Hu said People needed to grow the environment, dealing with drug
dealing, and said that the methods to boost social stability was needed to
respond to problems for the population.
\end{owtsample}

\end{document}